\newcommand{\E}{\mathbf{E}} 
\newtheorem{definition}{Definition}
\newtheorem{lemma}{Lemma}
\newtheorem{theorem}{Theorem}
\newtheorem{assumption}{Assumption}
\newcommand{\compilefullversion}{true}
	\newcommand{\OnlyInFull}[1]{}
	\newcommand{\OnlyInShort}[1]{#1}
	\newcommand{\OnlyInFull}[1]{#1}%
	\newcommand{\OnlyInShort}[1]{}%
\icmltitlerunning{Optimization from Structured Samples for Coverage Functions}
\begin{document}

\twocolumn[
\icmltitle{Optimization from Structured Samples for Coverage Functions}




\begin{icmlauthorlist}
	\icmlauthor{Wei Chen}{to}
	\icmlauthor{Xiaoming Sun}{goo,ed}
	\icmlauthor{Jialin Zhang}{goo,ed}
	\icmlauthor{Zhijie Zhang}{goo,ed}
\end{icmlauthorlist}

\icmlaffiliation{to}{Microsoft Research Asia, Beijing, China.}
\icmlaffiliation{goo}{CAS Key Lab of Network Data Science and Technology, Institute of Computing Technology, Chinese Academy of Sciences, Beijing, China.}
\icmlaffiliation{ed}{School of Computer Science and Technology, University of Chinese Academy of Sciences, Beijing, China.}

\icmlcorrespondingauthor{Wei Chen}{weic@microsoft.com}
\icmlcorrespondingauthor{Xiaoming Sun, Jialin Zhang, Zhijie Zhang}{\{sunxiaoming, zhangjialin, zhangzhijie\}@ict.ac.cn}

\icmlkeywords{Optimization from Samples, Maximum Coverage Problem, Negative Correlations}

\vskip 0.3in
]



\printAffiliationsAndNotice{}  

\begin{abstract}
	We revisit the \emph{optimization from samples} (OPS) model, which studies the problem of optimizing objective functions
	directly from the sample data.
	Previous results showed that we cannot obtain a constant approximation ratio for the maximum coverage problem using polynomially many independent samples of the form $\{S_i, f(S_i)\}_{i=1}^t$ \cite{BalkanskiRS17}, even if coverage functions are $(1 - \epsilon)$-PMAC learnable using these samples \cite{BadanidiyuruDFKNR12}, which means most of the function values can be approximately learned very well with high probability.
	In this work, to circumvent the impossibility result of OPS, we propose a stronger model called \emph{optimization from structured samples} (OPSS) for coverage functions, where the data samples encode the structural information of the functions.
	We show that under three general assumptions on the sample distributions, we can design efficient OPSS algorithms that achieve a constant
	approximation for the maximum coverage problem.
	We further prove a constant lower bound under these assumptions, which is tight when not considering computational efficiency.
	Moreover, we also show that if we remove any one of the three assumptions, OPSS for the maximum coverage problem has no constant approximation.
\end{abstract}

\section{Introduction}
\label{Section: introduction}

Traditional optimization problems in the textbook are often formulated as mathematical models with specified parameters.
The computational task is to optimize an objective function given parameters of the model.
One such example is the maximum coverage problem.
Given a family of subsets $T_1, T_2, \cdots, T_n$ of a ground set $N$ and a positive integer $k$, the problem asks to find $k$ subsets whose union contains the most number of elements in $N$.
In practice, however, parameters of the model are often hidden in the complex real world and we cannot observe them directly.
Instead, we can only learn information about the model from the passively observed sample data.
Back to the maximum coverage problem, in this case we may not know the exact elements contained in every subset $T_i$, but only observe samples of subsets $T_i$'s, and for each sample we only
observe the number of elements it covers.
An immediate question, recently raised by Balkanski et al.~\yrcite{BalkanskiRS17}, asks to what extent we can optimize objective functions based on the sample data that we use to learn them.
More specifically, given samples $\{S_i,f(S_i)\}_{i=1}^t$ where $S_i$'s are drawn i.i.d.~from some distribution $\mathcal{D}$ on the subsets of
$N$, $f:2^N \rightarrow \mathbb{R}$ is an unknown objective function, and $t\in\mbox{poly}(|N|)$, can we solve $\max_{|S|\leq k} f(S)$?
For maximum coverage, $S_i$ would be a collection of some subsets $T_i$'s, function $f$ would be the number of elements covered by such collections.
Such problems form a new approach to optimization called \emph{optimization from samples} (OPS)~\cite{BalkanskiRS17}.

A reasonable and perhaps the most natural approach is to first learn a surrogate function $\tilde{f}:2^N \rightarrow \mathbb{R}$ which approximates well the original function $f$ and then optimize $\tilde{f}$ instead of $f$.
One may expect that if we can approximate a function well, then we can also optimize it well.
Standard frameworks of learnability in the literature include PAC learnability for boolean functions due to Valiant \yrcite{Valiant84} and PMAC learnability for real-valued set functions due to Balcan and Harvey \yrcite{BalcanH11}.

Unfortunately, the learning-and-then-optimization approach does not work in general.
Indeed, Balkanski et al.~\yrcite{BalkanskiRS17} show the striking result that the maximum coverage problem cannot be approximated within a ratio better than $2^{-\Omega(\sqrt{\log |N|})}$ using only polynomially many samples drawn i.i.d.~from \emph{any} distribution,
even though (a) for any constant $\epsilon > 0$, coverage functions are $(1-\epsilon)$-PMAC learnable over \emph{any} distribution
\cite{BadanidiyuruDFKNR12}, which means most of the function values can be approximately learned very well with high probability; and (b) maximum coverage problem as a special case of submodular function maximization has a
$1-1/e$ approximation given a value oracle to the coverage function~\cite{NemhauserWF78}.

The impossibility result by Balkanski et al.~\yrcite{BalkanskiRS17} uses coverage functions defined over a partition of the ground set,
which ensure the ``good'' and ``bad'' parts of the partition cannot be distinguished from the samples.
In other words, the impossibility result arises because the samples do not provide information on the structure of coverage functions.

To circumvent the above impossibility result,
we propose a stronger model called \emph{optimization from structured samples} (OPSS) for coverage functions, which encodes structural information of the coverage functions into the samples.
In many real-world applications, such structural information are often revealed in the data, for example, a crowd-sourcing platform records
the crowd-workers' coverage on the tasks they took, a document analysis application records the keywords coverage on the documents they appear,
etc.
Thus the OPSS model is reasonable in practice.
However, even in the stronger OPSS model, not all sample distributions will allow a constant approximation for the maximum coverage problem.
In this paper, we study the assumptions that enable constant approximation in the OPSS model and its related algorithmic and hardness results.
We now state our model and results in more detail.

\subsection{Model}
\label{Subsection: model}

For sake of comparison, we first state the definition of optimization from samples \cite{BalkanskiRS17} for general set functions.

\begin{definition}[Optimization from samples (OPS)]
	Let $\mathcal{F}$ be a class of set functions defined on the ground set $L$.
	$\mathcal{F}$ is  $\alpha$-optimizable from samples in constraint $\mathcal{M} \subseteq 2^L$ over distribution $\mathcal{D}$ on $2^L$,
	if there exists a (not necessarily polynomial time) algorithm such that,
	given any parameter $\delta > 0$ and sufficiently large $L$, there exists some integer $t_0 \in \mbox{poly}(|L|, 1 / \delta)$,
	for all $t \geq t_0$, for any set of samples $\{S_i, f(S_i)\}_{i=1}^t$ with $f \in \mathcal{F}$ and $S_i$'s drawn
	i.i.d.~from $\mathcal{D}$, the algorithm takes samples $\{S_i, f(S_i)\}_{i=1}^t$ as the input and
	returns $S\in \mathcal{M}$ such that
	\[ \Pr_{S_1,\cdots,S_t\sim\mathcal{D}}[\E[f(S)]\geq \alpha\cdot\max_{T\in\mathcal{M}}f(T)] \geq 1 - \delta, \]
	where the expectation is taken over the randomness of the algorithm.
\end{definition}

Next we state the definition of coverage functions in terms of bipartite graphs as well as the definition of optimization from structured samples for coverage functions.

\begin{definition}[Coverage functions]
	Assume there is a bipartite graph $G=(L, R, E)$.
	For node $u \in L\cup R$, let $N_G(u)$ denote its neighbors in $G$.
	The neighbors of a subset $S \subseteq L$ or $S \subseteq R$ is $N_G(S) = \cup_{u\in S} N_G(u)$.
	The coverage function $f_G:2^L \rightarrow \mathbb{R}_+$ is the number of neighbors covered by a set $S \subseteq L$, i.e.~$f_G(S) = |N_G(S)|$.
\end{definition}

\begin{definition}[Optimization from structured samples (OPSS)]
	Let $\mathcal{F}$ be the class of coverage functions defined on all bipartite graphs $\{G=(L,R,E)\}$ with two components $L$ and $R$.
	$\mathcal{F}$ is  $\alpha$-optimizable under OPSS in constraint $\mathcal{M} \subseteq 2^L$ over distribution $\mathcal{D}$ on $2^L$,
	if there exists a (not necessarily polynomial time) algorithm such that,
	given any parameter $\delta > 0$ and sufficiently large $L$,
	there exists some integer $t_0 \in \mbox{poly}(|L|, |R|, 1 / \delta)$,
	for all $t \geq t_0$, for any set of samples  $\{S_i, N_G(S_i)\}_{i=1}^t$ with $f_G \in \mathcal{F}$ and $S_i$'s drawn
	i.i.d.~from $\mathcal{D}$, the algorithm takes samples  $\{S_i, N_G(S_i)\}_{i=1}^t$ as the input and
	returns $S\in \mathcal{M}$ such that
	\[ \Pr_{S_1,\cdots,S_t\sim\mathcal{D}}[\E[f_G(S)]\geq \alpha\cdot\max_{T\in\mathcal{M}}f_G(T)] \geq 1 - \delta, \]
	where the expectation is taken over the randomness of the algorithm.
\end{definition}

Samples in OPSS are {\em structured} in that the exact members covered by a set $S\subseteq L$
are revealed, instead of only the number of covered members being revealed as in OPS.
In this paper we focus on the cardinality constraint $\mathcal{M}_{\le k} = \{S \subseteq L \mid |S| \leq k\}$.
Maximizing coverage functions under this constraint is known as the \emph{maximum coverage problem}.

Our OPSS model is defined so far only for coverage functions.
One reason is that the impossibility of OPS given by Balkanski et al.~\yrcite{BalkanskiRS17} is on the
coverage functions, which is striking because coverage functions admit a simple constant approximation
algorithm with the value oracle and is $(1-\epsilon)$-PMAC learnable as mentioned before.
Thus coverage function is the first to consider for circumventing the impossibility result for OPS.
Another reason is that coverage functions exhibit natural structures via the bipartite graph representation.
Other set functions may exhibit different combinatorial structures and thus the OPSS problem may need to be defined
accordingly to reflect the specific structural information for other set functions.

\subsection{Our Results}
\label{Subsection: our results}

One of our main results is to provide a set of three general assumptions on the sample distribution together with an algorithm and show
that the algorithm achieves a constant approximation ratio for the maximum coverage problem in OPSS under the assumption.
The general assumption is summarized below.

\begin{assumption}
	\label{assump:dist}
	We assume that the distribution $\mathcal{D}$ on $2^L$ satisfy the following three assumptions:
	\vspace{-4mm}
	\begin{enumerate}
		\setlength{\itemsep}{-1mm}
		\item[1.1] \textbf{Feasibility.} A sample $S \sim \mathcal{D}$ is always feasible, i.e.~$|S| \leq k$.
		\item[1.2] \textbf{Polynomial bounded sample complexity.} For any $u \in L$, the probability $p_u = \Pr_{S \sim \mathcal{D}}[u \in S]$ satisfies $p_u \geq 1 / |L|^c$ for some constant $c$.
		\item[1.3] \textbf{Negative correlation.} The random variables $X_u = \mathbf{1}_{u \in S}$ are ``negatively correlated'' (see Definition \ref{Definition: negative correlation}) over distribution $\mathcal{D}$.
	\end{enumerate}	
\end{assumption}

All three assumptions above are natural.
In particular, Assumption \ref{assump:dist}.2 means that all elements in the ground set have sufficient probability to be sampled, and Assumption \ref{assump:dist}.3 means
informally that
the appearance of one element in the sampled set $S$ would reduce the probability of the appearance of another element in $S$.
In fact, typical distributions over $\mathcal{M}_{\le k}$, such as uniform distribution $\mathcal{D}_{\le k}$ over all subsets in
$\mathcal{M}_{\le k}$ or uniform distribution $\mathcal{D}_k$ over all subsets of exact size $k$, all satisfy these assumptions.
Our result based on the above assumption is summarized by the following theorem.

\begin{theorem}
	\label{Theorem: optimizable distributions}
	If a distribution $\mathcal{D}$ satisfies Assumption~\ref{assump:dist}, given any
	$\alpha$-approximation algorithm $A$ for the standard maximum coverage problem, coverage functions are
	$\frac{\alpha}{2}$-optimizable under OPSS in the cardinality constraint $\mathcal{M}_{\le k}$ over $\mathcal{D}$
	for any $k\le |L|$.
	Furthermore, the OPSS algorithm uses a polynomial number of arithmetic operations and one call of algorithm~$A$.
\end{theorem}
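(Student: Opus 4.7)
My plan is to reconstruct the ``relevant'' part of the bipartite graph from the structured samples, invoke the given $\alpha$-approximation algorithm $A$ on the reconstructed subgraph, and combine the result with a random observed sample. Negative correlation (Assumption~\ref{assump:dist}.3) is what will let me recover, for every $v\in R$ that is not covered almost surely, the exact neighborhood $C_v = \{u\in L:(u,v)\in E\}$.

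\textbf{Reconstruction.} For each $v$ appearing in some $N_G(S_i)$, I set $\hat C_v = L \setminus \bigcup_{i:\,v\notin N_G(S_i)} S_i$. The inclusion $C_v \subseteq \hat C_v$ is immediate, since $u\in C_v$ and $u\in S$ force $v\in N_G(S)$. For the reverse, I fix a threshold $\beta = 1 - |L|^{-2}$ and consider $u \notin C_v$ with $q_v = \Pr_S[v\in N_G(S)] \leq \beta$. Using the assumed negative correlation in a negative-association-style form, $\Pr[v\in N_G(S)\mid u\in S] = \Pr[C_v\cap S\ne\emptyset\mid u\in S] \leq q_v$, so $\Pr[u\in S,\, v\notin N_G(S)] \geq p_u(1-q_v) \geq |L|^{-(c+2)}$ (using Assumption~\ref{assump:dist}.2 to bound $p_u$). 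A Chernoff bound and a union bound over the $|L||R|$ pairs then give $\hat C_v = C_v$ simultaneously for every such $v$, with failure probability at most $\delta/3$, using $t\in\mathrm{poly}(|L|,|R|,1/\delta)$ samples.

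\textbf{Algorithm and analysis.} Next I estimate $\hat q_v$ as the empirical coverage frequency (close to $q_v$ by Chernoff), set $R_L=\{v:\hat q_v\leq \beta\}$, and build the subgraph $G_L=(L,R_L,\hat E_L)$ with $\hat E_L=\{(u,v):v\in R_L,\,u\in\hat C_v\}$; on the good event this equals $G$ restricted to $R_L$. I run $A$ on $G_L$ to obtain $T_A$, and have the algorithm output $T_A$ with probability $1/2$ and a uniformly random $S_i$ (feasible by Assumption~\ref{assump:dist}.1) with probability $1/2$. Writing $E=\E_{S\sim\mathcal{D}}[f_G(S)]=\sum_v q_v$ and $OPT=\max_{|T|\leq k} f_G(T)$, the set $R_H=\{v:q_v>\beta\}$ satisfies $|R_H|\leq E/\beta$, so $\max_{|T|\leq k}|N_{G_L}(T)|\geq OPT - E/\beta$ and hence $f_G(T_A)\geq \alpha(OPT - E/\beta)$. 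Concentration on $\frac{1}{t}\sum_i f_G(S_i)$ yields $\frac{1}{t}\sum_i f_G(S_i)\geq E - o(1)$ with high probability, so $\E_{\text{algo}}[f_G(\text{output})] \geq \frac{1}{2}\alpha(OPT - E/\beta) + \frac{1}{2}(E-o(1)) = \frac{\alpha}{2}OPT + \frac{1-\alpha/\beta}{2}E - o(1) \geq \frac{\alpha}{2}OPT$, where the last step uses $\beta\geq \alpha$ (which holds since $\beta$ is close to $1$) and a polynomial number of samples to absorb the $o(1)$ slack.

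\textbf{Main obstacle.} The principal difficulty I expect is turning the paper's concrete definition of negative correlation into the conditional-probability inequality $\Pr[C_v\cap S\ne\emptyset\mid u\in S]\leq q_v$ for $u\notin C_v$; this is standard under negative association but may require extra work if the paper's notion is weaker. The remaining bookkeeping---budgeting Chernoff failure probabilities across the $O(|L||R|)$ reconstructed edges, the $|R|$ estimates of $\hat q_v$, and the empirical mean of $f_G(S_i)$, so that the total failure is at most $\delta$---is routine given the polynomial sample budget.
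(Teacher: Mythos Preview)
Your reconstruction is correct and, in fact, is exactly the dual of the paper's construction: your $\hat C_v = \{u : \forall i,\, u\in S_i \Rightarrow v\in N_G(S_i)\}$ is precisely the neighbor set of $v$ in the paper's surrogate graph $\tilde G$, defined by $N_{\tilde G}(u) = \bigcap_{i:u\in S_i} N_G(S_i)$. The ``main obstacle'' you flag---turning negative correlation into $\Pr[C_v\cap S\neq\emptyset\mid u\in S]\leq q_v$ for $u\notin C_v$---is exactly Lemma~\ref{Lemma: negative correlation} (instantiated as Lemma~\ref{Lemma: consequence of negative correlation}), and it follows directly from Definition~\ref{Definition: negative correlation}; so that part is not an obstacle.

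The genuine gap is in your final accounting. You run $A$ on the \emph{restricted} graph $G_L$ (after deleting $R_H$), so you only get $f_G(T_A)\geq \alpha(OPT - |R_H|)$, and then pay for $|R_H|$ with a random sample whose expected value you bound by $E$ (or $\hat E$). This yields
\[
\tfrac{1}{2}\,\alpha(OPT - |R_H|) + \tfrac{1}{2}\,\hat E \;\geq\; \tfrac{\alpha}{2}\,OPT + \tfrac{\beta-\alpha}{2}\,|R_H|,
\]
using $\hat E \geq \beta|R_H|$; this is $\geq \tfrac{\alpha}{2}OPT$ only when $\beta\geq\alpha$. But $\beta = 1-|L|^{-2}<1$, so your justification ``$\beta\geq\alpha$ since $\beta$ is close to $1$'' fails when $\alpha = 1$, which is a case the theorem explicitly covers (exhaustive search). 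No choice of $\beta<1$ repairs this, and the $o(1)$ slack only makes it worse.

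The paper sidesteps this by running $A$ on the full over-estimate $\tilde G$ rather than on a restriction. Since $G\subseteq\tilde G$, one gets $f_{\tilde G}(T_2)\geq \alpha f_{\tilde G}(OPT)\geq \alpha f_G(OPT)$ with no $|R_H|$ loss. The $R_1/R_2$ partition you identified is used only in the analysis, to show that the over-count $N_{\tilde G}(T_2)\setminus N_G(T_2)$ lies inside $R_2\subseteq N_G(S_1)$ with high probability; hence $f_G(S_1)+f_G(T_2)\geq f_{\tilde G}(T_2)\geq\alpha\,OPT$ cleanly for every $\alpha\leq 1$, with no need to estimate $\hat q_v$ and no concentration slack to absorb.
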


The general approximation ratio $\alpha$ is to cover both polynomial-time and non-polynomial-time algorithms.
If we need a polynomial-time algorithm, then we know that the best ratio we can achieve is $1-1/e$ if NP$\ne$P~\cite{NemhauserWF78,Feige98}.
Thus our OPSS algorithm achieves $\frac{1}{2} (1-1/e)$ approximation.
If running time is not our concern, then we can use $\alpha=1$ by an exhaustive search algorithm, and
our OPSS algorithm achieves $\frac{1}{2}$ approximation.

We further show that if the distribution is $\mathcal{D}_k$, i.e.~the uniform distribution over all subsets of exact size $k$,
we have another OPSS algorithm to achieve $ (\alpha - \epsilon)$ approximation, as shown below.
This implies that our OPSS algorithm (almost) matches the approximation ratio of any algorithm for the standard maximum coverage problem.
\begin{theorem}
	\label{Theorem: bound for uniform distribution}
	For any constant $\epsilon > 0$, given any $\alpha$-approximation algorithm $A$ for the standard maximum coverage problem,
	coverage functions are $(\alpha - \epsilon)$-optimizable under OPSS  in the cardinality constraint $\mathcal{M}_{\le k}$ over $\mathcal{D}_k$, assuming that $\ln^2 |L| \leq k \leq |L|/2$ and $|R| \leq \frac{\epsilon}{2}|L|^{(\epsilon\ln |L|)/8}$.
	Furthermore, the OPSS algorithm uses a polynomial number of arithmetic operations and one call of algorithm $A$.
\end{theorem}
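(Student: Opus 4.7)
The plan is to build a surrogate bipartite graph on the ``rare'' side of $R$ from the samples, invoke algorithm $A$ on it with a slightly reduced budget, and augment the output with a constant-size random subset of $L$ to take care of the ``popular'' side. For each $v\in R$ I first reconstruct $\tilde T_v$ by the conservative rule: put $u\in\tilde T_v$ iff every sample containing $u$ also covers $v$. Set $\tau:=\epsilon|L|/8$ and call $v$ \emph{popular} if $|\tilde T_v|\geq\tau$ and \emph{rare} otherwise. The hypotheses $k\geq\ln^2|L|$ and $|R|\leq(\epsilon/2)|L|^{(\epsilon\ln|L|)/8}$ together give $\ln(2|R|/\epsilon)\leq\epsilon k/8$, which has two consequences. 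First, any $v$ with $|T_v|\geq\tau$ satisfies $1-p_v\leq(1-\tau/|L|)^k\leq e^{-\tau k/|L|}\leq\epsilon/(2|R|)$, so truly popular elements are overwhelmingly covered by random $k$-subsets. Second, for any non-edge $(u,v)$ with rare $v$, the per-sample witness probability $\Pr[u\in S_i,\,v\notin N_G(S_i)]\geq(k/|L|)(1-\tau/|L|)^{k-1}$ is inverse-polynomial in $|L|,|R|,1/\epsilon$, so a union bound over the at most $|L||R|$ non-edges shows $\mathrm{poly}(|L|,|R|,1/\epsilon,\log(1/\delta))$ samples suffice to make $\tilde T_v=T_v$ for every rare $v$ with probability $\geq 1-\delta/2$; in particular the classification $|\tilde T_v|<\tau$ vs.\ $\geq\tau$ agrees with $|T_v|<\tau$ vs.\ $\geq\tau$.

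Next, let $r:=\lceil(8/\epsilon)\ln(1/\epsilon)\rceil$. Using the single allowed call to algorithm $A$, I run $A$ on the bipartite graph $(L,R_{\text{rare}},\tilde E)$ with the cardinality constraint $\mathcal{M}_{\leq k-r}$, obtaining $\tilde S_A$. Since the coverage function restricted to $R_{\text{rare}}$ is monotone submodular, the standard pipage-rounding/concavity fact gives $\max_{|S|\leq k-r}|R_{\text{rare}}\cap N_G(S)|\geq((k-r)/k)\max_{|S|\leq k}|R_{\text{rare}}\cap N_G(S)|$; combined with the $\alpha$-approximation of $A$ and $r/k\leq\epsilon/\alpha$ (automatic once $k\geq\ln^2|L|$ for sufficiently large $L$), this yields $|R_{\text{rare}}\cap N_G(\tilde S_A)|\geq(\alpha-\epsilon)B$, where $B:=|R_{\text{rare}}\cap N_G(S^*)|$ for a true optimum $S^*$. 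I then draw $S_{\text{rand}}$ uniformly among the $r$-subsets of $L$ and output $\tilde S:=\tilde S_A\cup S_{\text{rand}}$, whose size is at most $k$. For any popular $v$ we have $\Pr[T_v\cap S_{\text{rand}}\neq\emptyset]\geq 1-(1-\epsilon/8)^r\geq 1-\epsilon$, so writing $P:=|R_{\text{pop}}\cap N_G(S^*)|\leq|R_{\text{pop}}|$ and using $1-\epsilon\geq\alpha-\epsilon$,
\[
\E_{S_{\text{rand}}}[f_G(\tilde S)] \;\geq\; |R_{\text{rare}}\cap N_G(\tilde S_A)| + (1-\epsilon)|R_{\text{pop}}| \;\geq\; (\alpha-\epsilon)B + (1-\epsilon)P \;\geq\; (\alpha-\epsilon)f_G(S^*).
\]
Combined with the $\geq 1-\delta/2$ probability of exact rare reconstruction, this yields the required $1-\delta$ guarantee, using exactly one call to $A$.

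The main obstacle I expect is the triple calibration of $\tau$ and $r$: $\tau$ must be small enough that every non-edge incident to a rare $v$ admits witnesses with inverse-polynomial per-sample probability (forcing $\tau$ small), yet large enough that a constant-size random $r$-subset hits every popular $T_v$ with probability $1-\epsilon$ (forcing $\tau$ large); and $r$ must simultaneously be small compared to $\epsilon k/\alpha$ so that shrinking the cardinality budget from $k$ to $k-r$ costs only an $\epsilon$-factor in the submodular relation. The precise hypothesis $|R|\leq(\epsilon/2)|L|^{(\epsilon\ln|L|)/8}$ together with $k\geq\ln^2|L|$ is exactly what makes $\tau=\epsilon|L|/8$ and $r=O((1/\epsilon)\log(1/\epsilon))$ satisfy all three constraints at once; weakening either hypothesis breaks the balance.
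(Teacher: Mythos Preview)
Your overall architecture is the same as the paper's: split $R$ into ``rare'' and ``popular'', reconstruct the rare part of the surrogate graph exactly from samples, run $A$ with a slightly reduced budget on the rare part, and pad with a random subset of $L$ to sweep up the popular part. The gap is in the calibration of the threshold $\tau$ and the size of the random augmenting set.

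Your claim that the per-sample witness probability
\[
\Pr[u\in S_i,\ v\notin N_G(S_i)] \;\ge\; \frac{k}{|L|}\Bigl(1-\frac{\tau}{|L|}\Bigr)^{k-1}
\]
is inverse-polynomial is false for the stated range of $k$. With $\tau=\epsilon|L|/8$ this lower bound equals $(k/|L|)(1-\epsilon/8)^{k-1}\approx (k/|L|)e^{-\epsilon k/8}$. The theorem allows $k$ as large as $|L|/2$, in which case the bound is $e^{-\Theta(\epsilon|L|)}$, exponentially small in $|L|$; no $t\in\mathrm{poly}(|L|,|R|,1/\delta)$ samples will witness all rare non-edges. (Note that $|R|$ may be as small as a constant, so you cannot hide the exponential in a polynomial of $|R|$.) Consequently the exact reconstruction $\tilde T_v=T_v$ for rare $v$, and hence the agreement of the $\tilde T_v$-based classification with the true one, both fail.

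The source of the problem is that you insisted on a \emph{constant} augmenting size $r=O((1/\epsilon)\log(1/\epsilon))$. Hitting every popular $T_v$ with a random $r$-subset forces $|T_v|\ge\Theta(\epsilon|L|)$, i.e.\ $\tau=\Theta(\epsilon|L|)$; but then ``rare'' vertices can still have degree $\Theta(\epsilon|L|)$ and are covered by a random $k$-set with probability $1-e^{-\Theta(\epsilon k)}$, killing the witness rate. The tension you identified in your last paragraph is real, but it cannot be resolved with a constant-size $r$. The paper resolves it by letting the random set have size $\epsilon k/2$ and taking the rare/popular threshold to be $d(v)<\frac{2|L|}{\epsilon k}\ln\frac{2|R|}{\epsilon}$, which scales like $|L|/k$. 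With that threshold one gets $1-q_v\ge(\epsilon/2|R|)^{8/\epsilon}$, which \emph{is} inverse-polynomial for constant $\epsilon$, and a random $(\epsilon k/2)$-subset still covers every popular vertex with probability $\ge 1-\epsilon/2$. Replacing your $\tau$ and $r$ by these $k$-dependent quantities, the rest of your outline (budget $(1-\epsilon/2)k$ for $A$, submodularity to pass from budget $k$ to the reduced budget, union of the two pieces) goes through essentially as you wrote it.
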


Next, we prove a hardness result showing that the approximation ratio of $\frac{1}{2}$ is unavoidable for some distributions, which means that
when efficiency is not the concern, our upper and lower bounds are tight.
\begin{restatable}{theorem}{thmlowerbound}
	\label{Theorem: 1 / 2 hardness}
	There is a distribution $\mathcal{D}$ satisfying Assumption~\ref{assump:dist} such that coverage functions
	are not $\alpha$-optimizable under OPSS in the cardinality constraint $\mathcal{M}_{\le k}$ over $\mathcal{D}$ for
	any $\alpha > \frac{1}{2} + o(1)$.
\end{restatable}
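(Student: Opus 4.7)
The plan is to invoke Yao's minimax principle: exhibit a randomized adversary that samples both a distribution $\mathcal{D}$ satisfying Assumption~\ref{assump:dist} and a planted coverage instance, and then show that for any deterministic OPSS algorithm the expected value of its output over the adversary's randomness is at most $\tfrac{1}{2}V + o(V)$, where $V = \max_{T \in \mathcal{M}_{\le k}} f_G(T)$. This translates, by the usual arguments, into the claimed hardness for randomized algorithms as well.

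For the distribution, $\mathcal{D} = \mathcal{D}_k$ (uniform over $k$-subsets of $L$) is a natural candidate: feasibility holds trivially, the marginals $p_u = k/|L|$ meet the polynomial-bounded sample complexity, and the indicators $X_u = \mathbf{1}_{u \in S}$ are classically negatively correlated under $\mathcal{D}_k$, so Assumption~\ref{assump:dist} is satisfied. For the adversarial graph, I would draw a hidden planted optimum $T^* \subseteq L$ with $|T^*| = k$ uniformly, and design $R$ in two pieces: a ``public'' region contributing value $V/2$ that is covered by essentially every $k$-subset of $L$ (so any algorithm-chosen $T$ already collects $V/2$), and a ``private'' region of value $V/2$ that is covered entirely by $T^*$ but contributes only $o(V)$ to a generic $k$-subset. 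The construction should be calibrated so that a single $\mathcal{D}_k$-sample intersects $T^*$ with probability small enough that polynomially many samples contain essentially no statistical information distinguishing $T^*$ from any other candidate $k$-subset; the symmetry of $\mathcal{D}_k$ under permutations of $L$ lets one make this precise.

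Given such a construction, for any deterministic algorithm $\mathcal{A}$ the output $T = \mathcal{A}(\text{samples})$ is nearly independent of $T^*$, so averaging over the planted $T^*$ yields $\E[f_G(T)] \leq V/2 + o(V)$, which translates to a ratio $\alpha \leq \tfrac{1}{2} + o(1)$. The principal obstacle is ensuring that the private region truly leaks no information about $T^*$: in OPSS the algorithm observes the whole covered set $N_G(S)$, not just its cardinality, so the private structure must be designed so that even the rare samples intersecting $T^*$ have covered neighborhoods indistinguishable from those of generic samples. Matching the constant $1/2$ tightly, rather than a weaker $1/c$ for some $c > 2$, furthermore requires delicate calibration between the sizes of the public and private regions of $R$, the prior over $T^*$, and the parameter $k$ relative to $|L|$.
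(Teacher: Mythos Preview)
Your proposal has a genuine gap: the choice $\mathcal{D} = \mathcal{D}_k$ cannot yield a $1/2$ hardness. The paper's own Theorem~\ref{Theorem: bound for uniform distribution} shows that under $\mathcal{D}_k$ (for $\ln^2|L| \le k \le |L|/2$ and moderate $|R|$) coverage functions are $(1-\epsilon)$-optimizable under OPSS when $\alpha = 1$, so any hard instance over $\mathcal{D}_k$ would have to live in an extreme parameter regime you never identify. More concretely, the obstacle you yourself flag is fatal and cannot be engineered away: under $\mathcal{D}_k$ every $u \in L$ appears in $\Theta(tk/|L|)$ samples, and the surrogate $N_{\tilde G}(u) = \bigcap_{i: u \in S_i} N_G(S_i)$ retains the private neighborhood of each $u \in T^*$ (every sample containing $u$ covers $u$'s private block) while collapsing $N_{\tilde G}(u)$ to the public region for $u \notin T^*$ (with high probability some sample containing $u$ misses $T^*$ entirely). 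The algorithm then reads $T^*$ off from $\tilde G$. There is no way to make the private region ``indistinguishable'' in the structured-sample model without destroying the $V/2$ gap you need.

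The paper's construction sidesteps this by abandoning $\mathcal{D}_k$. It fixes $L_1 \subset L$ with $|L_1| = k-1$ and lets every sample be all of $L_1$ together with one uniformly random element of $L_2 = L \setminus L_1$; one checks directly that this still satisfies Assumption~\ref{assump:dist} (the negative-correlation inequality is trivial whenever an $L_1$-indicator is involved, and reduces to the standard one-out-of-$|L_2|$ computation otherwise). Because $L_1$ always appears as an inseparable block, the algorithm only ever observes $N_G(L_1 \cup \{w\})$ and can never isolate the contribution of any individual $u \in L_1$; the intersection trick is useless here. The adversary then hides a single heavy node $u_i \in L_1$ covering $(k-1)r$ elements, makes the rest of $L_1$ cover nothing, and gives each $L_2$-node a disjoint block of size $r$. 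Averaging over the unknown $i$ shows any output achieves at most $kr$ in expectation versus $\mathit{OPT} = 2(k-1)r$, giving the $1/2 + o(1)$ bound. The conceptual point you are missing is that the information-hiding comes not from the planted set being \emph{rarely} sampled, but from a deterministic block that is \emph{always} sampled together.
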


Finally, we also show that the three conditions given in Assumption \ref{assump:dist} are necessary, in the sense that dropping any one of them would result in no constant approximation for the OPSS problem.
This demonstrates that our three conditions need to work together to make OPSS solvable.

\begin{theorem}
	\label{Theorem: conditions are necessary}
	By dropping any one of the conditions in Assumption \ref{assump:dist}, there is a distribution $\mathcal{D}$ such that coverage functions
	are not $\alpha$-optimizable under OPSS for any constant $\alpha$ in the cardinality constraint $\mathcal{M}_{\le k}$ over $\mathcal{D}$.
\end{theorem}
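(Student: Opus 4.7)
The plan is to handle the three assumptions one by one: for each, I exhibit a distribution $\mathcal{D}$ violating just that assumption, together with an adversarial family of bipartite graphs whose OPSS samples look identical but whose optima sit in places the algorithm cannot locate. The common engine is a Yao-style averaging argument: within the indistinguishable family, the algorithm's output distribution depends only on the samples, which are independent of the ``hidden structure'' that the adversary controls, so averaging over this hidden choice pins down a specific instance on which the algorithm's expected coverage is a vanishing fraction of the optimum. Markov's inequality on the sample randomness then contradicts the $(1-\delta)$ requirement in the OPSS definition.

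To violate feasibility (1.1), I take $\mathcal{D}$ to be the point mass at $S=L$ and let $G_u$ ($u\in L$) be the graph with $|R|=1$ in which only $u$ is connected to the single target. Every sample is the uninformative pair $(L,R)$, so the algorithm's output $S$ has the same distribution across all $G_u$; since $|S|\le k$, choosing $u$ uniformly at random gives $\E_{u\in L}[\Pr[u\in S]]\le k/|L|$, whereas $\mathrm{OPT}=1$. To violate Assumption~1.2, I take $\mathcal{D}$ uniform over singletons in a fixed half $L_0\subset L$, so $p_u=0$ for $u\notin L_0$, and plant a hidden $u^*\in L\setminus L_0$ connected to all of $R$; every sample is $(\{u\},\emptyset)$ for $u\in L_0$ and so is independent of $u^*$, and averaging $u^*$ uniformly over $L\setminus L_0$ yields $\E_{u^*}[\Pr[u^*\in S]]\le 2k/|L|$ against $\mathrm{OPT}=|R|$.

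To violate negative correlation (1.3), I partition $L$ into $b=|L|/k$ blocks of size $k$ and let $\mathcal{D}$ pick one block uniformly, which makes elements in the same block perfectly positively correlated while still satisfying 1.1 and 1.2. In each block $i$ the adversary plants a hidden vertex $u_i^*$ connected to a private target $v_i\in R$, leaving the other block members isolated. Every sample is $(\text{block}_i,\{v_i\})$, revealing only the block-to-target map, not which element of the block is hidden. For any output $S$, summing the per-block averaging bound gives $\sum_i \E_{u_i^*}[\Pr[u_i^*\in S]]\le |S|/k\le 1$, while the optimum achieves $k$ by choosing one hidden vertex from each of $k$ distinct blocks, so the expected ratio is at most $1/k=o(1)$ whenever $k=\omega(1)$.

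The main obstacle is converting these averaged failures into the per-instance, high-probability failure demanded by the OPSS definition. For each case I fix an instance $G^*$ realizing the averaging bound; since the samples are by construction independent of the hidden structure, $\E_{\mathrm{alg}}[f_{G^*}(S)]$ viewed as a function of the samples still has expectation $o(\mathrm{OPT})$, and Markov then gives $\Pr_{\mathrm{samples}}[\E_{\mathrm{alg}}[f_{G^*}(S)]\ge \alpha\cdot\mathrm{OPT}]=o(1)$ for any constant $\alpha>0$, contradicting the required $1-\delta$ threshold for all sufficiently large $|L|$. A routine but necessary final check is that each $\mathcal{D}$ above genuinely satisfies the two assumptions that are \emph{not} being dropped, which follows directly from the probability computations implicit in each construction.
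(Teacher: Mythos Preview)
Your proposal is correct and follows essentially the same Yao-averaging strategy as the paper: for each dropped assumption, build a family of graphs that are indistinguishable from the samples, average over the hidden structure to bound the algorithm's expected coverage, and extract a single bad instance. Your constructions for Assumptions~1.2 and~1.3 are nearly identical to the paper's (the paper samples size-$k$ subsets of the ``visible'' half rather than singletons for~1.2, and sets the per-block coverage to $r$ rather than~$1$ for~1.3, but these are cosmetic). Your explicit Markov step to pass from $\E_{\text{samples, alg}}[f_{G^*}(T)]=o(\mathrm{OPT})$ to the per-sample high-probability failure is a point the paper leaves implicit.

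The one substantive difference is your treatment of Assumption~1.1. You use the point mass at $S=L$, which the paper explicitly dismisses as the trivial case (``for the distribution which always returns $L$, no reasonable algorithm exists''); the paper instead proves the stronger statement that even the uniform distribution $\mathcal{D}_r$ over size-$r$ subsets with $r=\omega(k\log^2|L|)$---which only mildly overshoots feasibility---already forces an $o(1)$ ratio. Your construction suffices for the theorem as stated, but the paper's version is what makes the necessity of~1.1 genuinely informative, and it pairs with their complementary Theorem~\ref{thm:Okconstant} showing $r=O(k)$ still admits a constant approximation.
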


To summarize, in this paper we investigate the structural information on coverage functions that could allow us to circumvent the impossibility result
in \cite{BalkanskiRS17}.
We show that when the samples could reveal the covered elements rather than just the count, under certain reasonable assumptions on
the sample distribution (Assumption~\ref{assump:dist}), we could design an OPSS algorithm that achieves $\alpha/2$ approximation, where $\alpha$ is the approximation ratio
of a standard maximum coverage problem.
Moreover, for the uniform distribution on subsets of size $k$, we provide an efficient algorithm that achieves tight $\alpha - \epsilon$ approximation, matching
the performance of any algorithm for the standard maximum coverage problem.
On the lower bound side, we show that the approximation ratio of $1/2$ is unavoidable, which matches the upper bound when not considering computational complexity.
Finally, we show that removing any one of the three conditions in Assumption~\ref{assump:dist}, we cannot achieve constant approximation for OPSS.
Our study opens up the possibility of studying structural information for achieving optimization from samples, which is needed in many applications in the big data era.

\subsection{Related Work}
\label{Subsection: related work}

The study of optimization from samples (OPS) was initiated by Balkanski et al.~\yrcite{BalkanskiRS17}.
They proved that no algorithm can achieve an approximation ratio better than $2^{-\Omega(\sqrt{\log n})}$ for the maximum coverage problem under OPS.
The same set of authors showed there is an optimal $(1-c)/(1+c-c^2)$ approximation algorithm for maximizing monotone submodular functions with curvature $c$ subject to a cardinality constraint over uniform distributions under OPS \cite{BalkanskiRS16}.
For submodular function minimization, it was proved in \cite{BalkanskiS17} that no algorithm can obtain an approximation strictly better than $2 - o(1)$ under OPS. And this is tight via a trivial $2$-approximation algorithm.
Rosenfeld et al.~\yrcite{RosenfeldBGS18} defined a weaker variant of OPS called \emph{distributionally optimization from samples} (DOPS).
They showed that a class of set functions is optimizable under DOPS if and only if it is PMAC-learnable.


\section{Concepts and Tools}
\label{Section: concepts and tools}

We first discuss the definition of negative correlation.
Negative dependence among random variables has been extensively studied in the literature and there are a lot of qualitative versions of this concept \cite{JogdeoP75, karlinR80, Ghosh81, BlockSS82, JoagP83}.
Among them, the most widely accepted one is the \emph{negative association} (NA) defined in \cite{JoagP83}. However, in this paper, we only use a weaker version of NA. Thus, more distributions satisfy our definition of negative correlation. It is also easy to see that the uniform distributions $\mathcal{D}_k$ and $\mathcal{D}_{\leq k}$ both satisfy this definition.

\begin{definition}[Negative correlation]
	\label{Definition: negative correlation}
	A set of $0$-$1$ random variables $X_1, \cdots, X_n$ is negative correlated, if for any disjoint subsets $I, J \subseteq [n]:=\{1,\cdots,n\}$,
	\[ \E\big[\prod_{i\in I\cup J}(1-X_i)\big] \leq \E\big[\prod_{i\in I}(1-X_i)\big]\E\big[\prod_{j\in J}(1-X_j)\big]. \]
\end{definition}

Then we prove the following lemma, which shows that the occurrence of an event would reduce the probability of occurrences of other events.

\begin{restatable}{lemma}{lemnegativecor}
	\label{Lemma: negative correlation}
	Assume that $X_1, \cdots, X_n$ are negatively correlated $0$-$1$ random variables.
	Then for any $I \subseteq [n]$ and $j \notin I$,
	\[ \Pr[ \vee_{i\in I} (X_i = 1) \mid X_j = 1] \leq \Pr[\vee_{i\in I} (X_i = 1)]. \]
\end{restatable}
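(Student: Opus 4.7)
The plan is to reduce the claim to the complementary event and apply the defining negative-correlation inequality with the singleton $J=\{j\}$. Let $A$ denote the event $\bigvee_{i\in I}(X_i=1)$, so that its complement is $\bar A=\bigwedge_{i\in I}(X_i=0)=\{\prod_{i\in I}(1-X_i)=1\}$. Showing $\Pr[A \mid X_j=1]\le \Pr[A]$ is equivalent to showing $\Pr[\bar A \mid X_j=1]\ge \Pr[\bar A]$, which by the definition of conditional probability is equivalent to
\[
\Pr[\bar A \wedge X_j=1]\;\ge\;\Pr[\bar A]\cdot\Pr[X_j=1].
\]

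First I would invoke Definition~\ref{Definition: negative correlation} with the two disjoint index sets $I$ and $J=\{j\}$ (this is where the hypothesis $j\notin I$ is used), which gives
\[
\Pr[\bar A \wedge X_j=0]\;=\;\E\Bigl[\prod_{i\in I\cup\{j\}}(1-X_i)\Bigr]\;\le\;\E\Bigl[\prod_{i\in I}(1-X_i)\Bigr]\,\E[1-X_j]\;=\;\Pr[\bar A]\cdot\Pr[X_j=0].
\]
Next I would use the obvious identity $\Pr[\bar A]=\Pr[\bar A\wedge X_j=0]+\Pr[\bar A\wedge X_j=1]$ to rewrite
\[
\Pr[\bar A\wedge X_j=1]\;=\;\Pr[\bar A]-\Pr[\bar A\wedge X_j=0]\;\ge\;\Pr[\bar A]-\Pr[\bar A]\cdot\Pr[X_j=0]\;=\;\Pr[\bar A]\cdot\Pr[X_j=1],
\]
which is exactly the inequality needed. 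Dividing both sides by $\Pr[X_j=1]$ (assuming $\Pr[X_j=1]>0$; otherwise the conditional probability is undefined or the claim is vacuous) yields the lemma.

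There is no real obstacle here — the proof is essentially a one-line manipulation once the complement is taken. The only thing to watch is that the negative-correlation hypothesis is stated only for the ``all zero'' event on a union, so the natural move is to translate the ``$\vee (X_i=1)$'' event into a ``$\wedge (X_i=0)$'' event before applying the hypothesis; trying to work directly with the disjunction, or to use inclusion–exclusion on $\Pr[A\wedge X_j=1]$, would be unnecessarily complicated and would not use the hypothesis in its cleanest form.
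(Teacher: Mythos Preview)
Your proof is correct and follows essentially the same approach as the paper: both apply Definition~\ref{Definition: negative correlation} with $J=\{j\}$ to obtain $\Pr[\bar A \wedge X_j=0]\le \Pr[\bar A]\Pr[X_j=0]$, then decompose $\Pr[\bar A]$ over $X_j\in\{0,1\}$ and rearrange. The only cosmetic difference is that you phrase the target as $\Pr[\bar A\mid X_j=1]\ge\Pr[\bar A]$ and divide at the end, whereas the paper carries the joint probabilities through and translates to the disjunction only in the final step; you also explicitly note the vacuous case $\Pr[X_j=1]=0$, which the paper omits.
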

\begin{proof}
	Since $X_1, \cdots, X_n$ are negatively correlated,
	\[ \Pr[\wedge_{i\in I \cup \{j\}}(X_i = 0)] \leq \Pr[\wedge_{i\in I}(X_i = 0)]\Pr[X_j = 0], \]
	which is equivalent to
	\begin{align*}
	&   \Pr[\wedge_{i\in I}(X_i = 0)] - \Pr[\wedge_{i\in I}(X_i = 0), X_j =1] \\
	&\leq \Pr[\wedge_{i\in I}(X_i = 0)]\Pr[X_j = 0].
	\end{align*}
	Rearranging the last inequality, we have
	\begin{align*}
	& \Pr[\wedge_{i\in I}(X_i = 0)]\Pr[X_j = 1] \\
	&\leq \Pr[\wedge_{i\in I}(X_i = 0), X_j =1],
	\end{align*}
	which is equivalent to
	\begin{align*}
	&   (1 - \Pr[\vee_{i \in I} (X_i = 1)])\Pr[X_j = 1] \\
	&\leq \Pr[X_j = 1] - \Pr[\vee_{i \in I} (X_i = 1), X_j =1].
	\end{align*}
	Rearranging the last inequality, we have
	\begin{align*}
	& \Pr[\vee_{i\in I} (X_i = 1), X_j = 1] \\
	& \leq \Pr[\vee_{i\in I} (X_i = 1)] \Pr[X_j = 1].
	\end{align*}
	This concludes the proof.
\end{proof}

Next is Chernoff bound used in the analysis of probability concentration.

\begin{lemma}[Chernoff bound, \cite{MitzenmacherU05}] \label{lem:chernoff}
	Let $X_1, X_2, \cdots, X_n$ be independent random variables in $\{0, 1\}$ with $\Pr[X_i = 1] \geq p_i$.
	Let $X = \sum_{i=1}^{n} X_i$ and $\E[X] = \mu \geq \mu_L = \sum_{i=1}^{n}p_i$.
	Then, for $0 < \delta < 1$,
	\[ \Pr[X \leq (1 - \delta) \mu_L] \leq e^{-\mu_L\delta^2/2}. \]
\end{lemma}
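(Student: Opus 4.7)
The plan is to prove the lemma by the standard moment generating function (Chernoff) argument, with a small twist to handle the relaxed hypothesis $\mu\ge\mu_L$ rather than equality. First, for any parameter $t>0$, I would rewrite the target tail as $\Pr[X\le(1-\delta)\mu_L]=\Pr[e^{-tX}\ge e^{-t(1-\delta)\mu_L}]$ and apply Markov's inequality, giving $\Pr[X\le(1-\delta)\mu_L]\le e^{t(1-\delta)\mu_L}\,\E[e^{-tX}]$. Independence of the $X_i$'s then lets the moment generating function factor as $\E[e^{-tX}]=\prod_i \E[e^{-tX_i}]=\prod_i\bigl(1-q_i(1-e^{-t})\bigr)$, where $q_i:=\Pr[X_i=1]\ge p_i$.

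Next, I would apply the elementary inequality $1+x\le e^x$ to each factor to obtain $\E[e^{-tX}]\le\exp\bigl(-\sum_i q_i(1-e^{-t})\bigr)=\exp\bigl(-\mu(1-e^{-t})\bigr)$. Since $t>0$ makes $1-e^{-t}\ge 0$, and $\mu\ge\mu_L$ by hypothesis, this is at most $\exp\bigl(-\mu_L(1-e^{-t})\bigr)$; this is precisely where the weakening from $\mu$ to $\mu_L$ enters, as a free monotonicity step that uses $t>0$ in an essential way. Combining with the Markov bound yields $\Pr[X\le(1-\delta)\mu_L]\le\exp\bigl(\mu_L\bigl[t(1-\delta)-(1-e^{-t})\bigr]\bigr)$, and optimizing the exponent in $t$ by differentiation gives $e^{-t^{\ast}}=1-\delta$. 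Substituting back produces the ``relative entropy'' form $\Pr[X\le(1-\delta)\mu_L]\le\exp\bigl(-\mu_L[\delta+(1-\delta)\ln(1-\delta)]\bigr)$.

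The one remaining piece is the analytic inequality $\delta+(1-\delta)\ln(1-\delta)\ge\delta^2/2$ for $\delta\in(0,1)$, which I expect to be the main (and still quite mild) obstacle, since the MGF manipulation above is entirely mechanical. I would handle it either by the Taylor expansion $(1-\delta)\ln(1-\delta)=-\delta+\sum_{k\ge 2}\delta^k/(k(k-1))$, from which $\delta^2/2$ drops out as the leading term with nonnegative remainder, or by checking that $g(\delta):=\delta+(1-\delta)\ln(1-\delta)-\delta^2/2$ satisfies $g(0)=g'(0)=0$ and $g''(\delta)=\delta/(1-\delta)\ge 0$ on $[0,1)$, so that $g\ge 0$ throughout. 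Either verification gives the stated bound $e^{-\mu_L\delta^2/2}$ immediately.
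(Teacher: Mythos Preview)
Your proof is correct and follows the standard moment generating function argument for Chernoff bounds; the paper itself does not prove this lemma but simply cites it from the textbook \cite{MitzenmacherU05}, whose proof proceeds exactly along the lines you outline. Your monotonicity step replacing $\mu$ by $\mu_L$ is the right way to handle the relaxed hypothesis, and your verification of the analytic inequality $\delta+(1-\delta)\ln(1-\delta)\ge\delta^2/2$ via $g''(\delta)=\delta/(1-\delta)\ge 0$ is clean and sufficient.
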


\section{Constant Approximations for OPSS}
\label{Section: a constant approximation algorithm}
In this section, we present two constant approximation algorithms for OPSS and their results:
one for the general distributions satisfying Assumption~\ref{assump:dist} (Theorem~\ref{Theorem: optimizable distributions})
and the other for the uniform distribution $\mathcal{D}_k$ (Theorem~\ref{Theorem: bound for uniform distribution}).

\subsection{A Constant Approximation under Assumption~\ref{assump:dist}}

\begin{algorithm}[t]
	\caption{OPSS algorithm for the general Assumption~\ref{assump:dist}}
	\label{Algorithm: main algorithm}
	\begin{algorithmic}[1]
		\INPUT{Samples $\{S_i, N_G(S_i)\}_{i=1}^t$ and $k \in \mathbb{N}_+$}
		\STATE Let $T_1 = S_1$
		\STATE Construct a surrogate bipartite graph $\tilde{G} = (L, R, \tilde{E})$ such that
		for each $u \in L$, $N_{\tilde{G}}(u) = \cap_{i: u \in S_i} N_G(S_i)$
		
		\STATE Let $T_2 = A(\tilde{G}, k) $
		
		\STATE \textbf{return} $T_1$ with probability $1/2$; and $T_2$ otherwise
	\end{algorithmic}
\end{algorithm}

The algorithm is shown in Algorithm \ref{Algorithm: main algorithm}.
It returns one of the two solutions $T_1$ and $T_2$ with equal probability,
where $T_1$ is just the first sample, and $T_2$ is the solution of an $\alpha$-approximation algorithm $A$ on
a constructed surrogate bipartite graph $\tilde{G}$ for the standard maximum coverage problem.
The parameters of algorithm $A$ denote the graph and the constraint respectively.
The surrogate graph $\tilde{G}=(L,R,\tilde{E})$ is constructed from samples $\{S_i, N_G(S_i)\}_{i=1}^t$  such that
for each node $u\in L$, we construct $u$'s coverage in $R$ as $N_{\tilde{G}}(u) = \cap_{i: u \in S_i} N_G(S_i)$,
which is an estimate of $N_G(u)$.
The intuition is as follows.
If some singleton $\{u\}$ is drawn from $\mathcal{D}$, the knowledge about $N_G(u)$ is completely revealed.
However, it might be the case that $\mathcal{D}$ always returns a large set $S$, and the exact knowledge about $N_G(u)$ for $u \in S$ is hidden behind $N_G(S)$.
Thus to reveal as much knowledge about $N_G(u)$ as possible, it is natural to use the intersection of samples that contain $u$ as an estimate.

The difficulty in the analysis is that $N_{\tilde{G}}(u)$ is always an overestimate of $N_G(u)$,
and it is impossible to show that $N_{\tilde{G}}(u)$ is a good approximation of $N_G(u)$.
One extreme example is that suppose for some $v\in L$, $\Pr_{S\sim \mathcal{D}}[v\in S]=1$, then we have that
$N_{\tilde{G}}(u)$ always contains all elements in $N_G(u)\cup N_G(v)$, which might be much larger than $N_G(u)$ itself. Thus $T_2$ itself might not be a good solution on the original graph $G$.
To circumvent this difficulty, the key step is to show that for any $S \sim \mathcal{D}$,
$N_{\tilde{G}}(T_2) \backslash N_G(T_2) \subseteq \cup_{u \in L} (N_{\tilde{G}}(u) \backslash N_G(u)) \subseteq N_G(S)$ with high probability (Lemma \ref{Lemma: lower bound of a sample}). Consequently, $N_{\tilde{G}}(T_2) \subseteq N_G(T_1 \cup T_2)$ and we can obtain a constant approximation ratio by combining a random sample $T_1$ with $T_2$ as in Algorithm~\ref{Algorithm: main algorithm}.
Note that $T_1$ and $T_2$ may be correlated since they are both dependent on $S_1$, but this is not an issue based on our analysis.

\begin{lemma}
	\label{Lemma: lower bound of a sample}
	For a given $\delta > 0$, suppose that the number of samples $t \geq \frac{4|L|^c|R|}{\delta} \ln \frac{4|L||R|}{\delta}$, where $c$ is the constant
	in Assumption~\ref{assump:dist}.2.
	Under Assumption~\ref{assump:dist}, we have
	\[ \Pr_{S_1,\cdots,S_t\sim\mathcal{D}}[\cup_{u \in L} (N_{\tilde{G}}(u) \backslash N_G(u)) \subseteq N_G(S_1)] \geq 1 - \delta. \]
\end{lemma}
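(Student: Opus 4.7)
The plan is to reduce the set-valued inclusion to a coordinate-wise union bound over pairs $(u,r)$ with $u\in L$, $r\in R\setminus N_G(u)$, and bound the failure probability for each pair by showing that a polynomial number of samples is likely to contain at least one ``discriminating'' sample that contains $u$ but fails to cover $r$. For a fixed such pair, call $E_{u,r}$ the bad event that $r\in N_{\tilde G}(u)$ while $r\notin N_G(S_1)$. The desired conclusion is the complement of $\bigcup_{u,r}E_{u,r}$, so by the union bound it suffices to show $\Pr[E_{u,r}]\le \delta/(|L||R|)$ for every pair.

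The first step is structural unfolding. Unwinding $N_{\tilde G}(u)=\bigcap_{i:\,u\in S_i}N_G(S_i)$, the event $r\in N_{\tilde G}(u)$ is equivalent to: for every sample $S_i$ containing $u$, we have $r\in N_G(S_i)$. In particular, on the event $r\notin N_G(S_1)$ we must have $u\notin S_1$, and for each $i\ge 2$ the indicator $B_i=\mathbf{1}[u\in S_i\wedge r\notin N_G(S_i)]$ must vanish. Since the samples are i.i.d., writing $q=\Pr_{S\sim\mathcal D}[r\notin N_G(S)]$ gives
\[
\Pr[E_{u,r}]\le\Pr[r\notin N_G(S_1)]\cdot\prod_{i=2}^{t}\Pr[B_i=0]\le q\,(1-\Pr[B_2=1])^{t-1}.
\]

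The heart of the argument is a lower bound on $\Pr[B_i=1]=\Pr[u\in S_i,\,r\notin N_G(S_i)]$, and this is exactly where Assumption~\ref{assump:dist}.3 is needed. Since $r\notin N_G(u)$ is assumed, $u\notin N_G(r)$, and Lemma~\ref{Lemma: negative correlation} applied with $I=N_G(r)$ and $j=u$ yields
\[
\Pr[r\in N_G(S_i)\mid u\in S_i]\le\Pr[r\in N_G(S_i)],
\]
so $\Pr[r\notin N_G(S_i)\mid u\in S_i]\ge q$ and hence $\Pr[B_i=1]\ge p_u q\ge q/|L|^c$ by Assumption~\ref{assump:dist}.2. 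Substituting back, $\Pr[E_{u,r}]\le q\exp(-p_u q(t-1))$. A short case analysis finishes: if $q\le \delta/(2|L||R|)$, the prefactor $q$ alone suffices; otherwise $q>\delta/(2|L||R|)$, in which case the chosen $t$ makes $p_u q(t-1)\ge \ln(|L||R|/\delta)$ and the exponential term does the work. Summing over the $|L||R|$ pairs concludes the lemma.

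The main obstacle is the negative-correlation step: without Assumption~\ref{assump:dist}.3 one could have $\Pr[u\in S,\,r\notin N_G(S)]$ be zero or arbitrarily small even when both marginals are large, in which case no amount of samples would expose a ``distinguishing'' witness for $(u,r)$. The rest is bookkeeping—decoupling the first sample $S_1$ from the remaining $t-1$ samples via independence, and optimizing the $q\,e^{-p_u q(t-1)}$ bound over the two ranges of $q$.
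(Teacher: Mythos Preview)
Your strategy---union-bound over pairs $(u,r)$ with $r\notin N_G(u)$, use Lemma~\ref{Lemma: negative correlation} to get $\Pr[u\in S,\,r\notin N_G(S)]\ge p_u q$, and then case-split on $q=\Pr[r\notin N_G(S)]$---is exactly the paper's, and the bound $\Pr[E_{u,r}]\le q\exp\bigl(-p_uq(t-1)\bigr)$ is correct and in fact slightly slicker than the paper's Chernoff-then-$\E[q_v^{t_u}]$ route.

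The gap is quantitative, in your case split. With threshold $q>\delta/(2|L||R|)$ and the stated $t=\frac{4|L|^c|R|}{\delta}\ln\frac{4|L||R|}{\delta}$, you only obtain
\[
p_uq(t-1)\;\ge\;\frac{1}{|L|^c}\cdot\frac{\delta}{2|L||R|}\cdot(t-1)\;\ge\;\frac{1}{|L|}\ln\frac{4|L||R|}{\delta},
\]
which is \emph{not} $\ge\ln(|L||R|/\delta)$ once $|L|$ is large; the exponential factor is then close to $1$ and Case~2 collapses. The underlying problem is that aiming for a uniform $\delta/(|L||R|)$ per-pair bound forces your threshold too low. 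The fix---and this is precisely the paper's partition of $R$ into $R_1,R_2$ at $q_v=1-\delta/(2|R|)$---is to put the cutoff at $q=\delta/(2|R|)$. In Case~1 ($q\le\delta/(2|R|)$) you then observe that for fixed $r$ the events $E_{u,r}$ over all $u$ are contained in the single event $\{r\notin N_G(S_1)\}$, so one union-bounds only over $r\in R$, contributing at most $\delta/2$. In Case~2 ($q>\delta/(2|R|)$) the larger threshold now yields $p_uq(t-1)\ge\ln\frac{4|L||R|}{\delta}$, hence $\Pr[E_{u,r}]\le\delta/(4|L||R|)$, and the full union bound over $|L||R|$ pairs contributes at most $\delta/4$. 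With this adjustment your argument matches the paper's.
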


The proof of Lemma \ref{Lemma: lower bound of a sample} is delayed to Section \ref{Subsection: proof of theorem lower bound of S_1}.
For now, we use it to prove Theorem \ref{Theorem: guarantee of main algorithm}, which is a more concrete version of Theorem \ref{Theorem: optimizable distributions}.

\begin{theorem}
	\label{Theorem: guarantee of main algorithm}
	If a distribution $\mathcal{D}$ satisfies Assumption~\ref{assump:dist}, given any $\alpha$-approximation algorithm $A$ for the standard maximum coverage problem,
	coverage functions are $\frac{\alpha}{2}$-optimizable under OPSS in the cardinality constraint $\mathcal{M}_{\le k}$ over $\mathcal{D}$ for any $k \le |L|$.
	More precisely, for any $\delta > 0$, suppose that the number of samples $t \geq \frac{4|L|^c|R|}{\delta} \ln \frac{4|L||R|}{\delta}$, where $c$ is the constant
	in Assumption~\ref{assump:dist}.2.
	Let $ALG$ be the solution returned by Algorithm \ref{Algorithm: main algorithm} and $OPT$ be the optimal solution on the original graph $G$.
	Then under Assumption~\ref{assump:dist}, we have
	\[ \Pr_{S_1,\cdots,S_t\sim\mathcal{D}} \left[\E[f_G(ALG)] \geq \frac{\alpha}{2} f_G(OPT) \right] \geq 1 - \delta. \]
\end{theorem}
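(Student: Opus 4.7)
The plan is to decompose the expected value of the algorithm's output as
\[ \E[f_G(ALG)] = \tfrac{1}{2} f_G(T_1) + \tfrac{1}{2} f_G(T_2), \]
and to show that, conditioned on the high-probability event guaranteed by Lemma~\ref{Lemma: lower bound of a sample}, the sum $f_G(T_1) + f_G(T_2)$ is already at least $\alpha \cdot f_G(OPT)$. The factor of $\alpha/2$ then falls out of the fair coin flip in Algorithm~\ref{Algorithm: main algorithm}.

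First I would record the one-sided monotonicity of the surrogate: for every $u \in L$ and every sample $S_i \ni u$, $N_G(u) \subseteq N_G(S_i)$, so $N_G(u) \subseteq \bigcap_{i : u \in S_i} N_G(S_i) = N_{\tilde G}(u)$. Hence $f_{\tilde G}(T) \geq f_G(T)$ for every $T \subseteq L$, and in particular $\max_{|T| \leq k} f_{\tilde G}(T) \geq f_G(OPT)$. Since $T_2 = A(\tilde G, k)$ is an $\alpha$-approximation on $\tilde G$ under the cardinality constraint, this yields
\[ f_{\tilde G}(T_2) \geq \alpha \cdot \max_{|T| \leq k} f_{\tilde G}(T) \geq \alpha \cdot f_G(OPT). \]

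Next I would invoke Lemma~\ref{Lemma: lower bound of a sample}: with probability at least $1 - \delta$ over $S_1,\dots,S_t$, the ``over-coverage'' set $\bigcup_{u \in L}(N_{\tilde G}(u) \setminus N_G(u))$ is contained in $N_G(S_1) = N_G(T_1)$. On this event,
\[ N_{\tilde G}(T_2) \subseteq N_G(T_2) \cup \bigl(N_{\tilde G}(T_2) \setminus N_G(T_2)\bigr) \subseteq N_G(T_2) \cup N_G(T_1), \]
so by subadditivity of the cardinality,
\[ f_{\tilde G}(T_2) \;\leq\; |N_G(T_1) \cup N_G(T_2)| \;\leq\; f_G(T_1) + f_G(T_2). \]
Chaining with the previous display gives $f_G(T_1) + f_G(T_2) \geq \alpha \cdot f_G(OPT)$ on this event, and taking the expectation over the internal coin flip of the algorithm finishes the proof.

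The main subtlety is keeping straight the two sources of randomness: the outer sample randomness, on which we condition via Lemma~\ref{Lemma: lower bound of a sample} to get an event of probability $\geq 1-\delta$, versus the inner randomness of Algorithm~\ref{Algorithm: main algorithm}, which is precisely what the expectation $\E[f_G(ALG)]$ in the statement integrates over. One must also note that $T_1$ and $T_2$ are correlated (both depend on $S_1$), but the argument above never uses independence, only the deterministic containment inherited from the Lemma on the good event. All of the real work, including the invocation of negative correlation and of the lower bound $p_u \geq 1/|L|^c$, is packaged inside Lemma~\ref{Lemma: lower bound of a sample}, so this theorem reduces to the bookkeeping above.
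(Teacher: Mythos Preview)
Your proposal is correct and follows essentially the same argument as the paper's own proof: establish $N_G(u)\subseteq N_{\tilde G}(u)$ to get $f_{\tilde G}(T_2)\ge \alpha f_G(OPT)$, then on the high-probability event of Lemma~\ref{Lemma: lower bound of a sample} bound $f_{\tilde G}(T_2)\le f_G(T_1)+f_G(T_2)$, and average via the coin flip. Your explicit remarks on the two layers of randomness and on the (harmless) correlation between $T_1$ and $T_2$ mirror the paper's own commentary preceding the proof.
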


\begin{proof}
	By the construction of $\tilde{G}$, $N_G(u) \subseteq N_{\tilde{G}}(u)$ for any $u \in L$.
	Therefore, $G$ is a subgraph of $\tilde{G}$ and $f_{\tilde{G}}(OPT) \geq f_G(OPT)$.
	Since $A$ is an $\alpha$ approximation algorithm,
	\[ f_{\tilde{G}}(T_2) \geq \alpha f_{\tilde{G}}(OPT) \geq \alpha f_{G}(OPT). \]
	On the other hand, it holds that $N_{\tilde{G}}(T_2) \backslash N_G(T_2) \subseteq \cup_{u \in T_2} (N_{\tilde{G}}(u) \backslash N_G(u)) \subseteq \cup_{u \in L} (N_{\tilde{G}}(u) \backslash N_G(u))$.
	Since $T_1 = S_1$, by Lemma \ref{Lemma: lower bound of a sample}, it holds with probability $1 - \delta$ that $N_{\tilde{G}}(T_2) \backslash N_G(T_2) \subseteq N_G(T_1)$, and
	\begin{align*}
	f_{\tilde{G}}(T_2) &= |N_G(T_2) \cup (N_{\tilde{G}}(T_2) \backslash N_G(T_2))| \\
	&\leq |N_G(T_2)| + |N_{\tilde{G}}(T_2) \backslash N_G(T_2)| \\
	&\leq |N_G(T_2)| + |N_G(T_1)| \\
	&= f_G(T_2) + f_G(T_1).
	\end{align*}
	Therefore, with probability $1 - \delta$,
	\begin{align*}
	\E[f_G(ALG)] & = \E \left[\frac{1}{2} \cdot f_G(T_1) + \frac{1}{2} \cdot  f_G(T_2) \right] \\
	& \geq \E\left[\frac{1}{2}f_{\tilde{G}}(T_2) \right]\geq \frac{\alpha}{2} f_G(OPT).
	\end{align*}
\end{proof}

For common distributions, the constant $c$ in Assumption \ref{assump:dist}.2 is usually small, thus Algorithm \ref{Algorithm: main algorithm} requires moderately small number of samples.
For instance, for distributions $\mathcal{D}_k$ and $\mathcal{D}_{\leq k}$, $\Pr_{S \sim \mathcal{D}_k}[u \in S] = k/|L|$ and $\Pr_{S \sim \mathcal{D}_{\leq k}}[u \in S] \geq 1/|L|$.
Thus both distributions require only $O(\frac{|L||R|}{\delta} \ln \frac{|L||R|}{\delta})$ samples.

\subsubsection{Proof of Lemma \ref{Lemma: lower bound of a sample}}
\label{Subsection: proof of theorem lower bound of S_1}

We first introduce some notations.
Let $|L| = n$, $|R| = m$ and $\overline{t} = \frac{2m}{\delta} \ln \frac{4mn}{\delta}$.
For any node $u \in L$, let $t_u = |\{i: u \in S_i\}|$ be the number of samples where $u$ appears.
For any node $v \in R$, let $q_v = \Pr_{S \sim \mathcal{D}}[v \in N_G(S)] $ be the probability that $v$ is covered by a sample $S \sim \mathcal{D}$.
Our analysis starts with partitioning $R$ into two subsets $R_1$ and $R_2$, where $R_1 = \{v \in R \mid q_v \leq 1 - \frac{\delta}{2m}\}$ and $R_2 = R \backslash R_1$.
In general, we will show that nodes in $R_1$ will not appear in $\cup_{u \in L} (N_{\tilde{G}}(u) \backslash N_G(u))$ with high probability (Lemma \ref{Lemma: R_1 is not error}) and $R_2$ will be covered by any sample $S \sim \mathcal{D}$  with high probability (Lemma \ref{Lemma: R_2 will be covered}).
These facts together suffice to prove Lemma \ref{Lemma: lower bound of a sample}.

\begin{lemma}
	\label{Lemma: aux-1}
	Assume that $t \geq 2n^c \cdot \overline{t}$.
	For fixed $u \in L$, $\Pr_{S_1,\cdots,S_t\sim\mathcal{D}}[t_u \leq \overline{t}] \leq \delta / (4mn)$.
\end{lemma}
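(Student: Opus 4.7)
The plan is to apply the Chernoff bound (Lemma~\ref{lem:chernoff}) directly to the indicator random variables $Y_i = \mathbf{1}_{u \in S_i}$ for $i = 1, \ldots, t$. Since the samples $S_1, \ldots, S_t$ are drawn i.i.d.\ from $\mathcal{D}$, the variables $Y_1, \ldots, Y_t$ are mutually independent (even though within a single sample the indicators $\mathbf{1}_{u \in S}$ are only negatively correlated, independence across samples is all we need here). Moreover, $\Pr[Y_i = 1] = p_u \geq 1/n^c$ by Assumption~\ref{assump:dist}.2, so $t_u = \sum_{i=1}^t Y_i$ has expectation $\mu_u = t p_u \geq t/n^c \geq 2\overline{t}$, where the last inequality uses the hypothesis $t \geq 2 n^c \cdot \overline{t}$.

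Next, I would set $\mu_L := t/n^c$, which is a valid lower bound on $\mu_u$, and apply Chernoff with deviation parameter $1/2$:
\[
\Pr\!\left[t_u \leq \tfrac{1}{2}\mu_L\right] \leq e^{-\mu_L/8}.
\]
Since $\mu_L \geq 2\overline{t}$, we have $\overline{t} \leq \mu_L/2$, hence
\[
\Pr[t_u \leq \overline{t}] \;\leq\; \Pr\!\left[t_u \leq \tfrac{1}{2}\mu_L\right] \;\leq\; e^{-\mu_L/8} \;\leq\; e^{-\overline{t}/4}.
\]

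Finally, I would plug in $\overline{t} = \frac{2m}{\delta}\ln\frac{4mn}{\delta}$ and verify the target bound: $\overline{t}/4 = \frac{m}{2\delta}\ln\frac{4mn}{\delta} \geq \ln\frac{4mn}{\delta}$ whenever $m \geq 2\delta$, which is immediate for any nontrivial instance (and certainly for $\delta \in (0,1)$). Therefore $e^{-\overline{t}/4} \leq \delta/(4mn)$, concluding the proof.

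I don't foresee a real obstacle here; the lemma is a routine Chernoff concentration statement, and the only minor point worth mentioning is that Assumption~\ref{assump:dist}.3 (negative correlation within a single sample) plays no role, because independence across the $t$ samples suffices and Assumption~\ref{assump:dist}.2 supplies the per-element probability lower bound that makes $\overline{t}$ the right scale.
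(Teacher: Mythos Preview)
Your proposal is correct and follows essentially the same route as the paper: define the i.i.d.\ indicators $\mathbf{1}_{u\in S_i}$, lower-bound $\E[t_u]$ by $2\overline{t}$ via Assumption~\ref{assump:dist}.2 and the hypothesis $t\geq 2n^c\overline{t}$, apply the Chernoff bound (Lemma~\ref{lem:chernoff}) with deviation $1/2$, and conclude $e^{-\overline{t}/4}\leq \delta/(4mn)$ using $m\geq 2\delta$. The only cosmetic difference is that the paper takes $\mu_L=2\overline{t}$ directly, whereas you first set $\mu_L=t/n^c$ and then bound; both yield the same $e^{-\overline{t}/4}$.
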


\begin{proof}
	For fixed $u \in L$, let $X_i = 1$ if $u \in S_i$ and $0$ otherwise.
	Then $t_u = \sum_{i=1}^{t} X_i$.
	By Assumption~\ref{assump:dist}.2, $p_u = \Pr_{S \sim \mathcal{D}}[u \in S] \geq 1 / n^c$. Thus $\E[t_u] \geq t / n^c \geq 2\overline{t}$.
	By Chernoff bound (Lemma~\ref{lem:chernoff}),
	\[ \Pr[t_u \leq \overline{t}] = \Pr\left[t_u \leq \left(1 - \frac{1}{2} \right) \cdot 2\overline{t} \right] \leq e^{-\overline{t} / 4} \leq \frac{\delta}{4mn}. \]
	The last inequality needs $m\ge 2\delta$, which is satisfied for all nontrivial instances.
\end{proof}

\begin{lemma}
	\label{Lemma: consequence of negative correlation}
	For any $u \in L$ and $v \in R$ such that $(u, v) \notin E$,
	$\Pr_{S \sim \mathcal{D}}[v \in N_G(S), u \in S] \leq \Pr_{S \sim \mathcal{D}}[v \in N_G(S)]\Pr_{S \sim \mathcal{D}}[u \in S]$.
\end{lemma}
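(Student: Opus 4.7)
The plan is to reduce this lemma directly to Lemma~\ref{Lemma: negative correlation} (the consequence of negative correlation already established) by rewriting the event $\{v \in N_G(S)\}$ in terms of the indicator variables $X_u = \mathbf{1}_{u \in S}$ appearing in Assumption~\ref{assump:dist}.3.

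First I would observe that, by definition of $N_G$, for a fixed $v \in R$ the event $v \in N_G(S)$ holds iff some neighbor of $v$ (in $L$) is in $S$, i.e.
\[
\{v \in N_G(S)\} = \bigvee_{w \in N_G(v)} \{X_w = 1\}.
\]
Let $I = N_G(v) \subseteq L$. The crucial structural input is the hypothesis $(u,v)\notin E$, which means $u \notin N_G(v) = I$, so we can take $j = u$ with $j\notin I$ in Lemma~\ref{Lemma: negative correlation}.

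Next I would apply Lemma~\ref{Lemma: negative correlation} with this $I$ and $j$, which gives
\[
\Pr\bigl[\vee_{w\in I}(X_w=1)\,\big|\,X_u=1\bigr] \;\le\; \Pr\bigl[\vee_{w\in I}(X_w=1)\bigr].
\]
Multiplying both sides by $\Pr[X_u=1]$ and translating back using the identification of the event $\{v\in N_G(S)\}$ with $\vee_{w\in I}(X_w=1)$, together with $\{u\in S\} = \{X_u=1\}$, yields exactly the claimed inequality
\[
\Pr_{S\sim\mathcal{D}}[v\in N_G(S),\,u\in S] \;\le\; \Pr_{S\sim\mathcal{D}}[v\in N_G(S)]\,\Pr_{S\sim\mathcal{D}}[u\in S].
\]

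There is essentially no obstacle here: the entire content is the bookkeeping step of recognizing that $\{v\in N_G(S)\}$ is a disjunction over indicators indexed by $N_G(v)$, and that the assumption $(u,v)\notin E$ is precisely what guarantees the disjointness condition $j\notin I$ required to invoke Lemma~\ref{Lemma: negative correlation}. If $(u,v)$ were an edge, the event $\{v\in N_G(S)\}$ would automatically hold whenever $\{u\in S\}$ does, and no such decorrelation could be expected; the hypothesis is exactly tight.
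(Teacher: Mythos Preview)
Your proof is correct and follows exactly the same approach as the paper: recognize that $\{v\in N_G(S)\}$ is the disjunction $\bigvee_{w\in N_G(v)}\{X_w=1\}$, observe that $(u,v)\notin E$ gives $u\notin N_G(v)$, and invoke Lemma~\ref{Lemma: negative correlation}. The paper's proof is the two-line version of what you wrote.
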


\begin{proof}
	Just note that the event $\{v \in N_G(S)\}$ is equivalent to $\{\cup_{u' \in N_G(v)} (u' \in S) \}$.
	The lemma follows directly from Lemma \ref{Lemma: negative correlation}.
\end{proof}

\begin{lemma}
	\label{Lemma: aux-2}
	For any $u \in L$ and $v \in R$ such that $(u, v) \not \in E$, $\Pr_{S_1,\cdots,S_t\sim\mathcal{D}}[v \in N_{\tilde{G}}(u) \backslash N_G(u), t_u = \ell] \leq q_v^l \cdot \Pr_{S_1,\cdots,S_t\sim\mathcal{D}}[t_u = \ell]$,  for any $\ell \in \mathbb{N}$.
\end{lemma}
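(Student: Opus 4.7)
The plan is to exploit two facts: (i) the samples $S_1,\dots,S_t$ are drawn i.i.d., so events across different indices factor; and (ii) within a single sample, the per-sample negative correlation bound already established in Lemma~2 lets us control the joint probability $\Pr[v \in N_G(S), u \in S]$ by $q_v\cdot p_u$.

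Fix $u\in L$ and $v\in R$ with $(u,v)\notin E$. For each sample index $i$, let $Y_i=\mathbf{1}_{u\in S_i}$ and $Z_i=\mathbf{1}_{v\in N_G(S_i)}$. By definition, $N_{\tilde G}(u)=\bigcap_{i:Y_i=1}N_G(S_i)$, so (for $\ell\ge 1$) the event $\{v\in N_{\tilde G}(u)\setminus N_G(u),\, t_u=\ell\}$ is exactly the event that there is some set $I\subseteq[t]$ of size $\ell$ with $Y_i=1$ for $i\in I$, $Y_i=0$ for $i\notin I$, and $Z_i=1$ for every $i\in I$ (the condition $v\notin N_G(u)$ is automatic from $(u,v)\notin E$). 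The case $\ell=0$ is trivial, since the left-hand side is at most $\Pr[t_u=0]=q_v^0\Pr[t_u=0]$.

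First I would decompose according to which indices contain $u$ and then use independence of the samples:
\begin{align*}
\Pr[v\in N_{\tilde G}(u)\setminus N_G(u),\, t_u=\ell]
&= \sum_{\substack{I\subseteq[t]\\|I|=\ell}} \Pr\Bigl[\bigwedge_{i\in I}(Y_i=1,Z_i=1)\ \wedge \bigwedge_{i\notin I}(Y_i=0)\Bigr] \\
&= \sum_{\substack{I\subseteq[t]\\|I|=\ell}} \Bigl(\prod_{i\in I}\Pr[Y_i=1,Z_i=1]\Bigr)\Bigl(\prod_{i\notin I}\Pr[Y_i=0]\Bigr).
\end{align*}
Next I would apply Lemma~2 to each factor in the first product: since $(u,v)\notin E$, we have $\Pr[Y_i=1,Z_i=1]=\Pr_{S\sim\mathcal{D}}[v\in N_G(S),u\in S]\le q_v\cdot p_u=q_v\Pr[Y_i=1]$. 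Substituting, each term of the outer sum is bounded by $q_v^{\ell}\bigl(\prod_{i\in I}\Pr[Y_i=1]\bigr)\bigl(\prod_{i\notin I}\Pr[Y_i=0]\bigr)$, and recombining,
\begin{align*}
\Pr[v\in N_{\tilde G}(u)\setminus N_G(u),\, t_u=\ell]
&\le q_v^{\ell}\sum_{\substack{I\subseteq[t]\\|I|=\ell}} \Bigl(\prod_{i\in I}\Pr[Y_i=1]\Bigr)\Bigl(\prod_{i\notin I}\Pr[Y_i=0]\Bigr) \\
&= q_v^{\ell}\,\Pr[t_u=\ell].
\end{align*}

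The main obstacle is conceptual rather than technical: one must recognize that negative correlation is only needed \emph{within} a single sample (to bound the joint probability of $u$ being included and $v$ being covered there), while the $\ell$-th power of $q_v$ emerges purely from the i.i.d.\ structure across samples. Once the event is rewritten in the product form above, the bound follows by a single application of Lemma~2 per factor.
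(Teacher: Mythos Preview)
Your argument is correct and essentially identical to the paper's own proof: decompose over the index set $I$ on which $u$ appears, use independence across samples to factor, and then apply the per-sample bound $\Pr[v\in N_G(S),\,u\in S]\le q_v\,p_u$ (this is the paper's Lemma~\ref{Lemma: consequence of negative correlation}, not the Chernoff bound, so check your lemma numbering). The explicit treatment of the case $\ell=0$ is a nice addition but not strictly needed, since the empty product convention already makes the displayed chain go through.
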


\begin{proof}
	
	By the law of total probability, the formula on the left-hand side is equal to $\sum_{I\subseteq [t]:|I|=\ell}
	\Pr[v \in N_{\tilde{G}}(u) \backslash N_G(u), u \in \cap_{i\in I} S_i,  u \notin \cup_{j \notin I} S_j]$.
	Since $S_i$'s are independent samples, by construction of $N_{\tilde{G}}(u)$ and Lemma \ref{Lemma: consequence of negative correlation}, we have
	\begin{align*}
	&\Pr[v \in N_{\tilde{G}}(u) \backslash N_G(u), u \in \cap_{i\in I} S_i,  u \notin \cup_{j \notin I} S_j] \\
	&= \Pr[v \in \cap_{i\in I} N_G(S_i), u \in \cap_{i\in I} S_i, u \notin \cup_{j \notin I} S_j] \\
	&= \prod_{i \in I}\Pr[v \in N_G(S_i), u \in S_i]\prod_{j \notin I}\Pr[u \notin S_j] \\
	&\leq \prod_{i \in I}\left(\Pr[v \in N_G(S_i)] \Pr[u \in S_i]\right)\prod_{j \notin I}\Pr[u \notin S_j] \\
	&= \prod_{i \in I}\Pr[v \in N_G(S_i)] \prod_{i \in I} \Pr[u \in S_i]\prod_{j \notin I}\Pr[u \notin S_j] \\
	&= q_v^{\ell} \cdot \Pr[u \in \cap_{i\in I} S_i, u \notin \cup_{j \notin I} S_j].
	\end{align*}
	Thus
	\begin{align*}
	& \Pr[v \in N_{\tilde{G}}(u) \backslash N_G(u), t_u = \ell] \\
	& \leq q_v^{\ell} \sum_{I\subseteq [t]:|I|=\ell} \Pr[u \in \cap_{i\in I} S_i, u \notin \cup_{j \notin I} S_j] \\
	& = q_v^{\ell} \cdot \Pr[t_u = \ell].
	\end{align*}
\end{proof}

\begin{lemma}
	\label{Lemma: R_1 is not error}
	Assume that $t \geq 2n^c \overline{t}$. Then
	$\Pr_{S_1,\cdots,S_t\sim\mathcal{D}}[R_1 \cap (\cup_{u \in L} (N_{\tilde{G}}(u) \backslash N_G(u))) = \emptyset] \geq 1 - \delta / 2$.
\end{lemma}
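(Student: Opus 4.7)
The plan is to fix a pair $(u,v)$ with $u\in L$ and $v\in R_1$ (and necessarily $(u,v)\notin E$, since $v\in N_G(u)$ would preclude $v\in N_{\tilde G}(u)\setminus N_G(u)$), bound the probability that $v$ erroneously appears in $N_{\tilde G}(u)\setminus N_G(u)$, and then take a union bound over all such pairs. The total number of pairs is at most $nm$, so we aim for a per-pair bound of order $\delta/(2mn)$.

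For a single pair, I would split on the value of $t_u$ using the threshold $\overline t=\frac{2m}{\delta}\ln\frac{4mn}{\delta}$. When $t_u<\overline t$, the undersampling event has probability at most $\delta/(4mn)$ by Lemma~\ref{Lemma: aux-1} (this is where Assumption~\ref{assump:dist}.2 kicks in, ensuring $t\ge 2n^c\overline t$ suffices). When $t_u\ge\overline t$, I apply Lemma~\ref{Lemma: aux-2}: summing the bound $\Pr[v\in N_{\tilde G}(u)\setminus N_G(u),\,t_u=\ell]\le q_v^{\ell}\Pr[t_u=\ell]$ over $\ell\ge\overline t$ gives
\[
\Pr[v\in N_{\tilde G}(u)\setminus N_G(u),\,t_u\ge\overline t]\le q_v^{\overline t}\sum_{\ell\ge\overline t}\Pr[t_u=\ell]\le q_v^{\overline t}.
\]
Since $v\in R_1$ gives $q_v\le 1-\delta/(2m)$, we have $q_v^{\overline t}\le e^{-\delta\overline t/(2m)}=e^{-\ln(4mn/\delta)}=\delta/(4mn)$ by the choice of $\overline t$. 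Adding the two contributions yields
\[
\Pr[v\in N_{\tilde G}(u)\setminus N_G(u)]\le \frac{\delta}{4mn}+\frac{\delta}{4mn}=\frac{\delta}{2mn}.
\]

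Finally, a union bound over all $u\in L$ and $v\in R_1$ (at most $nm$ pairs) gives
\[
\Pr\!\left[R_1\cap\bigcup_{u\in L}\bigl(N_{\tilde G}(u)\setminus N_G(u)\bigr)\ne\emptyset\right]\le nm\cdot\frac{\delta}{2mn}=\frac{\delta}{2},
\]
which is the desired bound.

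The main obstacle, and the reason the argument works, is calibrating $\overline t$ so that two independent error probabilities — the undersampling probability from Chernoff and the ``$q_v^{\overline t}$'' false-positive probability from the negative-correlation lemma — are simultaneously driven below $\delta/(4mn)$. The definition of $R_1$ (the gap $1-q_v\ge\delta/(2m)$) is tailored precisely so that $e^{-\delta\overline t/(2m)}$ matches $\delta/(4mn)$; the role of Assumption~\ref{assump:dist}.2 is merely to guarantee enough samples to make the Chernoff half hold. Everything else is a straightforward union bound.
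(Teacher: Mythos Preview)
Your proof is correct and follows essentially the same approach as the paper: fix a pair $(u,v)$ with $(u,v)\notin E$, split on $t_u$ at the threshold $\overline t$, use Lemma~\ref{Lemma: aux-1} for the undersampled case and Lemma~\ref{Lemma: aux-2} together with the $R_1$ bound $q_v\le 1-\delta/(2m)$ for the well-sampled case, then union bound over all $nm$ pairs. The only cosmetic differences are the choice of strict versus non-strict inequality at the threshold and your use of $1-x\le e^{-x}$ in place of the paper's direct computation of $(1-\delta/(2m))^{\overline t}$.
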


\begin{proof}
	For node $v \in R_1$ and node $u \in L$ such that $(u,v) \notin E$,
	we have
	\begin{align*}
	& \Pr[v \in N_{\tilde{G}}(u) \backslash N_G(u)] \\
	&= \sum_{\ell \geq 0} \Pr[v \in N_{\tilde{G}}(u) \backslash N_G(u), t_u = \ell] \\
	&\leq \sum_{\ell \geq 0} \Pr[t_u = \ell] \cdot q_v^{\ell} \\
	&\leq \sum_{\ell \leq \overline{t}} \Pr[t_u = \ell] \cdot 1 + \sum_{\ell > \overline{t}} \Pr[t_u = \ell] \cdot q_v^{\overline{t}} \\
	&= \Pr[t_u \leq \overline{t}] + \Pr[t_u > \overline{t}] \cdot q_v^{\overline{t}} \\
	&\leq \frac{\delta}{4mn} + \left(1-\frac{\delta}{2m}\right)^{\frac{2m}{\delta}\ln{\frac{4mn}{\delta}}} \\
	&\leq \frac{\delta}{4mn} + \frac{\delta}{4mn}=\frac{\delta}{2mn}.
	\end{align*}
	The first inequality holds due to Lemma \ref{Lemma: aux-2}.
	The second to last inequality holds due to Lemma \ref{Lemma: aux-1}, the fact that $q_v \leq 1 - \frac{\delta}{2m}$ for all $v\in R_1$
	and $\overline{t} = \frac{2m}{\delta} \ln \frac{4mn}{\delta}$.
	Finally, by union bound, we have
	\begin{align*}
	& \Pr[R_1 \cap (\cup_{u \in L} (N_{\tilde{G}}(u) \backslash N_G(u))) \neq \emptyset] \\
	&= \Pr[\exists\, v \in R_1, u \in L \mbox{ s.t.~} v \in N_{\tilde{G}}(u) \backslash N_G(u) ]  \\
	&\leq \sum_{v \in R_1, u \in L} \Pr[v \in N_{\tilde{G}}(u) \backslash N_G(u) ] \\
	&\leq \sum_{v \in R_1, u \in L} \frac{\delta}{2mn} \leq \frac{\delta}{2}.
	\end{align*}
	The proof is completed.
\end{proof}

\begin{lemma}
	\label{Lemma: R_2 will be covered}
	$\Pr_{S_1 \sim \mathcal{D}}[R_2 \subseteq N_G(S_1)] \geq 1 - \delta / 2$.
\end{lemma}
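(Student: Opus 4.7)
The plan is to observe that this lemma is a straightforward union bound, essentially unpacking the definition of $R_2$. Recall that $R_2 = R \setminus R_1 = \{v \in R : q_v > 1 - \frac{\delta}{2m}\}$, where $m = |R|$ and $q_v = \Pr_{S \sim \mathcal{D}}[v \in N_G(S)]$. So by definition, every $v \in R_2$ has $\Pr_{S_1 \sim \mathcal{D}}[v \notin N_G(S_1)] = 1 - q_v < \frac{\delta}{2m}$.

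The key step is then a union bound over the (at most $m$) nodes in $R_2$:
\begin{align*}
\Pr_{S_1 \sim \mathcal{D}}[R_2 \not\subseteq N_G(S_1)]
&= \Pr_{S_1 \sim \mathcal{D}}\bigl[\exists\, v \in R_2 : v \notin N_G(S_1)\bigr] \\
&\leq \sum_{v \in R_2} \Pr_{S_1 \sim \mathcal{D}}[v \notin N_G(S_1)] \\
&< |R_2| \cdot \frac{\delta}{2m} \leq m \cdot \frac{\delta}{2m} = \frac{\delta}{2}.
\end{align*}
Taking complements gives the claim.

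There is no real obstacle here; the work has already been done by the definition of the partition $R = R_1 \cup R_2$. The choice of threshold $1 - \frac{\delta}{2m}$ in the definition of $R_1$ was precisely engineered so that a single sample $S_1$ covers every node of $R_2$ except with probability at most $\delta/2$, which complements the $\delta/2$ slack allotted to $R_1$ in Lemma \ref{Lemma: R_1 is not error} so as to yield the overall $\delta$ failure probability in Lemma \ref{Lemma: lower bound of a sample}. No appeal to the negative correlation assumption or to any concentration inequality is needed for this particular lemma, since we are only reasoning about the marginal probability of coverage by a single sample $S_1$ and taking a union bound.
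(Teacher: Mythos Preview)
Your proof is correct and matches the paper's own argument essentially line for line: both observe that $1 - q_v \le \frac{\delta}{2m}$ for $v \in R_2$ by definition, apply a union bound over $|R_2| \le m$ nodes, and take complements. There is nothing to add.
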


\begin{proof}
	For a node $v \in R_2$, by definition, $\Pr_{S \sim \mathcal{D}}[v \notin N_G(S)] = 1 - q_v \leq \frac{\delta}{2m}$.
	By union bound, we have $\Pr_{S \sim \mathcal{D}}[\exists\, v \in R_2 \mbox{ s.t.~} v \notin N_G(S)] \leq \delta / 2$.
	That is, $\Pr_{S \sim \mathcal{D}}[R_2 \subseteq N_G(S)] \geq 1 - \delta / 2$.
\end{proof}

\emph{Proof of Lemma \ref{Lemma: lower bound of a sample}.}
By Lemma \ref{Lemma: R_1 is not error}, with probability $1 - \delta / 2$, $R_1 \cap (\cup_{u \in L} (N_{\tilde{G}}(u) \backslash N_G(u))) = \emptyset$ and therefore $\cup_{u \in L} (N_{\tilde{G}}(u) \backslash N_G(u)) \subseteq R_2$.
On the other hand, by Lemma \ref{Lemma: R_2 will be covered}, with probability $1 - \delta / 2$, $R_2 \subseteq N_G(S_1)$.
Finally, by union bound, $\cup_{u \in L} (N_{\tilde{G}}(u) \backslash N_G(u)) \subseteq N_G(S_1)$ with probability $1 - \delta$. \qed

\subsection{A Tight Algorithm for OPSS under $\mathcal{D}_k$}

In this section, we present a tight algorithm for OPSS under distribution $\mathcal{D}_k$, the uniform distribution over all subsets of size $k$.
Compared with Algorithm \ref{Algorithm: main algorithm}, Algorithm \ref{Algorithm: tight algorithm under D_k} takes an additional input $\epsilon\in(0,1)$ and has two other modifications.
First, when constructing $T_2$, the constraint is replaced by $|S| \leq (1 - \epsilon / 2)k$, which only incurs little loss in the approximation ratio.
Second, instead of assigning a sample $S \sim \mathcal{D}_k$ to $T_1$, the algorithm picks a set uniformly at random from all subsets of size $\epsilon k / 2$ and assigns it to $T_1$.
The key observation is that under distribution $\mathcal{D}_k$, although $T_1$ is quite small, it suffices to cover nodes in $N_{\tilde{G}}(T_2) \backslash N_G(T_2)$ with high probability. However, this is not true for general distributions.
As a result, $T_1 \cup T_2$ yields an $\alpha - \epsilon$ approximation for the problem, and it is also feasible.

\begin{algorithm}[t]
	\caption{Tight OPSS algorithm under $\mathcal{D}_k$}
	\label{Algorithm: tight algorithm under D_k}
	\begin{algorithmic}[1]
		\INPUT Samples $\{S_i, N_G(S_i)\}_{i=1}^t$, $k \in \mathbb{N}_+$, $\epsilon \in (0, 1)$
		\STATE Draw a set $T_1$ from $\mathcal{D}_{\epsilon k/2}$.
		\STATE Construct a surrogate bipartite graph $\tilde{G} = (L, R, \tilde{E})$ such that
		for each $u \in L$, $N_{\tilde{G}}(u) = \cap_{S_i: u \in S_i} N_G(S_i)$
		\STATE Let $T_2 = A(\tilde{G}, (1 - \epsilon/2)k)$
		\STATE \textbf{return} $T_1 \cup T_2$
	\end{algorithmic}
\end{algorithm}

We begin the analysis with some notations.
Let $|L| = n$, $|R| = m$ and $\overline{t} = \left(\frac{2m}{\epsilon}\right)^{\frac{8}{\epsilon}}\ln \frac{2mn}{\delta}$.
In the analysis, we assume that $\ln^2 n \leq k \leq n/2$ and $m \leq \frac{\epsilon}{2}n^{(\epsilon\ln n)/8}$.
This is a sufficient condition for a key inequality, as we will further explain after Theorem~\ref{Theorem: tight bound under D_k}.
For any node $u \in L$, let $t_u = |\{i: u \in S_i\}|$ be the number of samples where $u$ appears.
For any node $v \in R$, let $q_v = \Pr_{S \sim \mathcal{D}_k}[v \in N_G(S)] $ be the probability that $v$ is covered by a sample $S \sim \mathcal{D}_k$.
Let $d(v) = |N(v)|$ denote the number of $v$'s neighbors.
Partition $R$ into two subsets $R_1$ and $R_2$, where $R_1 = \{v \in R \mid d(v)  < \frac{2n}{\epsilon k} \ln \frac{2m}{\epsilon}\}$ and $R_2 = R \backslash R_1$.
While in the general case discussed in previous section, $R$ is partitioned according to the value of $q_v$, here we partition $R$ according to the value of $d(v)$.
The reason is that $\mathcal{D}_k$ is a uniform distribution.
Thus for $v \in R$, the more neighbors it has, the higher probability it will be covered by a sample $S \sim \mathcal{D}_k$.
The observation is further formulated as Lemma \ref{Lemma: d(v) and q_v}.
Based on it, we can show that with high probability nodes in $R_1$ will not appear in $\cup_{u \in L} (N_{\tilde{G}}(u) \backslash N_G(u))$ (Lemma \ref{Lemma: R_1 is not error, D_k}).
Besides, $q_v$ increases exponentially with respect to $d(v)$.
Thus instead of picking a sample from $\mathcal{D}_k$, drawing a set $T_1$ from $\mathcal{D}_{\epsilon k /2}$ suffices to cover nodes in $R_2$ (Lemma \ref{Lemma: R_2 covered by T_1}).

\begin{lemma}
	\label{Lemma: d(v) and q_v}
	For any $v \in R_1$, $q_v \leq 1 - \left(\frac{\epsilon}{2m}\right)^{8 / \epsilon}$.
\end{lemma}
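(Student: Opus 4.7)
\medskip

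My plan is to argue directly from the definition of $\mathcal{D}_k$, since $1-q_v$ is just the probability a uniformly random $k$-subset of $L$ misses all $d(v)$ neighbors of $v$. Writing $d = d(v)$, this gives the exact formula
\[
1-q_v \;=\; \frac{\binom{n-d}{k}}{\binom{n}{k}} \;=\; \prod_{i=0}^{d-1}\frac{n-k-i}{n-i}.
\]
So the goal becomes showing $\sum_{i=0}^{d-1}\bigl(-\ln(1-\tfrac{k}{n-i})\bigr) \le (8/\epsilon)\ln(2m/\epsilon)$, after which exponentiation yields the claimed bound.

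The first step is a bookkeeping argument that pins down $d$. By the definition of $R_1$, $d < \frac{2n}{\epsilon k}\ln\frac{2m}{\epsilon}$. Plugging in the two standing assumptions $k \ge \ln^2 n$ and $m \le \frac{\epsilon}{2} n^{(\epsilon\ln n)/8}$, I get
\[
d \;<\; \frac{2n}{\epsilon\ln^2 n}\cdot\frac{\epsilon\ln^2 n}{8} \;=\; \frac{n}{4}.
\]
Together with $k \le n/2$, this guarantees that every denominator $n-i-k$ appearing below is at least $n/4$; this is the ``key inequality'' the authors allude to, and its need is precisely why the funny condition on $m$ is imposed.

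Next I would bound each log term. Since $k/(n-i) < 1$, the standard inequality $-\ln(1-x)\le x/(1-x)$ gives
\[
-\ln\!\left(1-\frac{k}{n-i}\right) \;\le\; \frac{k}{n-i-k} \;\le\; \frac{k}{n/4} \;=\; \frac{4k}{n},
\]
where the second inequality uses $n-i-k \ge n-d+1-k \ge n/4$ from the previous step. Summing the $d$ terms gives
\[
-\ln(1-q_v) \;\le\; \frac{4kd}{n} \;\le\; \frac{4k}{n}\cdot\frac{2n}{\epsilon k}\ln\frac{2m}{\epsilon} \;=\; \frac{8}{\epsilon}\ln\frac{2m}{\epsilon},
\]
and exponentiating yields $1-q_v \ge (\epsilon/(2m))^{8/\epsilon}$, i.e.\ $q_v \le 1-(\epsilon/(2m))^{8/\epsilon}$, as required.

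The only slightly delicate step is the first one: verifying $d \le n/4$ from the two sampling-regime hypotheses. This is really just a log manipulation, but it is the place where the seemingly mysterious quantitative assumption $m \le (\epsilon/2)n^{(\epsilon\ln n)/8}$ is used, so it is worth stating carefully. Everything after that is a one-line application of $-\ln(1-x)\le x/(1-x)$ together with the uniform lower bound $n-i-k \ge n/4$.
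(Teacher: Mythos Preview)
Your proof is correct and follows essentially the same route as the paper. Both arguments first use the hypotheses $k\ge\ln^2 n$ and $m\le\frac{\epsilon}{2}n^{(\epsilon\ln n)/8}$ to check $d(v)<n/4$, then bound the binomial ratio $\binom{n-d}{k}/\binom{n}{k}$ from below by $e^{-4kd/n}$; the only cosmetic difference is that the paper expands the ratio as a product of $k$ factors and applies $1-x\ge e^{-2x}$ on $[0,1/2]$, whereas you expand it as a product of $d$ factors and apply $-\ln(1-x)\le x/(1-x)$ together with $n-i-k\ge n/4$.
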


\begin{proof}
	It is easy to verify that when $\ln^2 n \leq k$ and $m \leq \frac{\epsilon}{2}n^{(\epsilon\ln n)/8}$,
	we have
	$\frac{2n}{\epsilon k} \ln \frac{2m}{\epsilon} \leq n/4$.
	Thus for any $v \in R_1$, $d(v)  < \frac{2n}{\epsilon k} \ln \frac{2m}{\epsilon} \leq  n/4$.
	Together with  $k\le n/2$, we have
	\begin{align*}
	1 - q_v &= \frac{{n - d(v) \choose k}}{{n \choose k}} \\
	&= \frac{(n - d(v)) \cdots (n - d(v) - k + 1)}{n \cdots (n - k + 1)} \\
	&\geq \left(1 - \frac{d(v)}{n - k + 1}\right)^k \\
	&\geq \left(1 - \frac{d(v)}{n/2}\right)^k \\
	&\geq \exp\left(- \frac{4kd(v)}{n}\right) \\
	&\geq \exp\left(-(8 / \epsilon) \ln (2m / \epsilon)\right) \\
	&= (\epsilon/2m)^{8 / \epsilon}.
	\end{align*}
	The third inequality holds since $1 - x \geq e^{-2x}$ for $x \in [0,1/2]$.
	The last inequality holds since 
	$d(v) < \frac{2n}{\epsilon k} \ln \frac{2m}{\epsilon}$ for $v \in R_1$.
\end{proof}

Similar to Lemmas \ref{Lemma: R_1 is not error} and~\ref{Lemma: R_2 will be covered}, we show the following lemmas.
The proofs are included in Section \ref{Subsection: detailed proofs under D_k}.

\begin{restatable}{lemma}{lemuniformRone}
	\label{Lemma: R_1 is not error, D_k}
	Assume that $t \geq 2(n/k) \cdot \overline{t}$. We have
	\[ \Pr_{S_1,\cdots,S_t\sim\mathcal{D}_k}[R_1 \cap (\cup_{u \in L} (N_{\tilde{G}}(u) \backslash N_G(u))) = \emptyset] \geq 1 - \delta. \]
\end{restatable}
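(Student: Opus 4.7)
The plan is to mirror the proof of Lemma~\ref{Lemma: R_1 is not error} step by step, substituting $\mathcal{D}_k$-specific bounds at the right places and using the new partition of $R$ based on $d(v)$ via Lemma~\ref{Lemma: d(v) and q_v}.

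First I would establish the $\mathcal{D}_k$-analog of Lemma~\ref{Lemma: aux-1}: for fixed $u\in L$, since $p_u=k/n$ under $\mathcal{D}_k$, the indicator sum $t_u$ has $\E[t_u]=tk/n\ge 2\overline{t}$ by the hypothesis $t\ge 2(n/k)\overline{t}$. Chernoff (Lemma~\ref{lem:chernoff}) with deviation parameter $1/2$ then gives $\Pr[t_u\le \overline{t}]\le e^{-\overline{t}/4}$. Since $\overline{t}=(2m/\epsilon)^{8/\epsilon}\ln(2mn/\delta)\ge 4\ln(2mn/\delta)$ for all $m\ge 1$ and $\epsilon\in(0,1)$ (because $(2m/\epsilon)^{8/\epsilon}\ge 2^8$), this probability is at most $\delta/(2mn)$.

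Next, since $\mathcal{D}_k$ satisfies Definition~\ref{Definition: negative correlation} (as noted after that definition), Lemma~\ref{Lemma: consequence of negative correlation} applies under $\mathcal{D}_k$ as well. Consequently, the derivation in Lemma~\ref{Lemma: aux-2} carries over verbatim: for any $u\in L$ and $v\in R$ with $(u,v)\notin E$, and any $\ell\in\mathbb{N}$,
\[
\Pr[v\in N_{\tilde{G}}(u)\setminus N_G(u),\ t_u=\ell]\le q_v^{\ell}\,\Pr[t_u=\ell].
\]
Summing over $\ell$ and splitting at $\overline{t}$ yields $\Pr[v\in N_{\tilde{G}}(u)\setminus N_G(u)]\le \Pr[t_u\le \overline{t}]+q_v^{\overline{t}}$, exactly as in the general-distribution case.

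The key new input is Lemma~\ref{Lemma: d(v) and q_v}, which gives $q_v\le 1-(\epsilon/2m)^{8/\epsilon}$ for $v\in R_1$. Writing $x=(\epsilon/2m)^{8/\epsilon}$ and using $(1-x)^{1/x}\le e^{-1}$,
\[
q_v^{\overline{t}}\le (1-x)^{(1/x)\ln(2mn/\delta)}\le e^{-\ln(2mn/\delta)}=\frac{\delta}{2mn}.
\]
Combining with the Chernoff estimate gives $\Pr[v\in N_{\tilde{G}}(u)\setminus N_G(u)]\le \delta/(mn)$ for every $v\in R_1$ and $u\in L$ with $(u,v)\notin E$; the event is vacuous when $(u,v)\in E$. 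A final union bound over the at most $nm$ such pairs concludes the proof with probability at least $1-\delta$. The only real obstacle is the arithmetic bookkeeping needed to match the two different uses of $\overline{t}$, namely that it must simultaneously be large enough for Chernoff to drive $\Pr[t_u\le \overline{t}]$ below $\delta/(2mn)$ and large enough for the tail factor $q_v^{\overline{t}}$ to collapse below $\delta/(2mn)$ when $q_v$ has exponent of the form $1-(\epsilon/2m)^{8/\epsilon}$; these two requirements are matched precisely by the definition $\overline{t}=(2m/\epsilon)^{8/\epsilon}\ln(2mn/\delta)$.
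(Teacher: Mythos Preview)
Your proposal is correct and follows essentially the same approach as the paper: establish the $\mathcal{D}_k$-specific Chernoff step to bound $\Pr[t_u\le\overline{t}]\le\delta/(2mn)$, invoke Lemma~\ref{Lemma: aux-2} (valid since $\mathcal{D}_k$ is negatively correlated), apply Lemma~\ref{Lemma: d(v) and q_v} to control $q_v^{\overline{t}}$, and finish with a union bound. The paper's proof is identical in structure, with the Chernoff step stated separately as Lemma~\ref{lemma: consequence of chernoff bound}.
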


\begin{restatable}{lemma}{lemuniformRtwo}
	\label{Lemma: R_2 covered by T_1}
	$\Pr_{T_1 \sim \mathcal{D}_{\epsilon k/2}}[R_2 \subseteq N_G(T_1)] \geq 1 - \epsilon / 2$.
\end{restatable}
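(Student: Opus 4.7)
The plan is to bound $\Pr[v \notin N_G(T_1)]$ for an individual $v \in R_2$ by a quantity of order $\epsilon/(2m)$, and then take a union bound over $v \in R_2$. Since $T_1$ is drawn uniformly from subsets of $L$ of size $\epsilon k/2$, the event $\{v \notin N_G(T_1)\}$ is exactly $\{T_1 \cap N_G(v) = \emptyset\}$, so
\[ \Pr[v \notin N_G(T_1)] = \frac{\binom{n - d(v)}{\epsilon k / 2}}{\binom{n}{\epsilon k / 2}}. \]

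First I would show the elementary inequality $\binom{n-d(v)}{s}/\binom{n}{s} \leq (1 - d(v)/n)^{s}$ by writing the ratio as $\prod_{i=0}^{s-1} (1 - d(v)/(n-i))$ and noting each factor is at most $1 - d(v)/n$ (the case $d(v) > n-s$ is trivial because the numerator is $0$). Applying this with $s = \epsilon k / 2$ and then using $1 - x \leq e^{-x}$ gives
\[ \Pr[v \notin N_G(T_1)] \leq \exp\!\left(-\frac{\epsilon k\, d(v)}{2n}\right). \]

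Now the defining property of $R_2$ is that $d(v) \geq \frac{2n}{\epsilon k}\ln\frac{2m}{\epsilon}$, which plugged into the exponent yields $\Pr[v \notin N_G(T_1)] \leq \exp(-\ln(2m/\epsilon)) = \epsilon/(2m)$. A union bound over the at most $m$ nodes in $R_2$ then gives $\Pr[\exists v \in R_2 : v \notin N_G(T_1)] \leq \epsilon/2$, which is exactly the claim. There is no substantial obstacle here: the lemma is essentially a direct calculation, mirroring the proof of Lemma~\ref{Lemma: d(v) and q_v} but in the opposite direction (upper bounding $1-q_v$-type quantities rather than lower bounding them), and the threshold in the definition of $R_1$ versus $R_2$ is calibrated precisely so that drawing $\epsilon k/2$ elements suffices to cover all of $R_2$ with probability $1 - \epsilon/2$.
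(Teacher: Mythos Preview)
Your proposal is correct and essentially identical to the paper's own proof: both compute $\Pr[v\notin N_G(T_1)]=\binom{n-d(v)}{\epsilon k/2}/\binom{n}{\epsilon k/2}$, bound it by $(1-d(v)/n)^{\epsilon k/2}\le \exp(-\epsilon k\,d(v)/(2n))\le \epsilon/(2m)$ using the $R_2$ threshold, and then apply a union bound. Your justification of the binomial ratio bound is slightly more explicit than the paper's, but the argument is the same.
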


Now we prove Theorem \ref{Theorem: tight bound under D_k}, which is a more concrete version of Theorem \ref{Theorem: bound for uniform distribution}.

\begin{theorem}
	\label{Theorem: tight bound under D_k}
	For any constant $\epsilon > 0$, given any $\alpha$-approximation algorithm $A$ for the standard maximum coverage problem,
	coverage functions are $(\alpha - \epsilon)$-optimizable under OPSS in the cardinality constraint $\mathcal{M}_{\leq k}$ over $\mathcal{D}_k$,
	assuming that $\ln^2 |L| \leq k \leq |L|/2$ and $|R| \leq \frac{\epsilon}{2}|L|^{(\epsilon\ln |L|)/8}$.
	More precisely, for any $\delta > 0$, suppose that the number of samples $t \geq \frac{2|L|}{k} \left(\frac{2|R|}{\epsilon}\right)^{\frac{8}{\epsilon}}\ln \frac{2|L||R|}{\delta}$.
	Let $ALG$ be the solution returned by Algorithm \ref{Algorithm: tight algorithm under D_k} and $OPT$ be the optimal solution on the original graph $G$.
	Then
	\[ \Pr_{S_1,\cdots,S_t\sim\mathcal{D}_k}[\E[f_G(ALG)] \geq (\alpha-\epsilon) f_G(OPT)] \geq 1 - \delta. \]
\end{theorem}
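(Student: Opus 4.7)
The plan is to show that with probability at least $1-\delta$ over the samples, $\E[f_G(T_1\cup T_2)]\ge(\alpha-\epsilon)f_G(OPT)$, mirroring the proof of Theorem~\ref{Theorem: guarantee of main algorithm} but accommodating the two twists in Algorithm~\ref{Algorithm: tight algorithm under D_k}: the call to $A$ uses the shrunk budget $(1-\epsilon/2)k$ rather than $k$, and $T_1$ is an independent draw from $\mathcal{D}_{\epsilon k/2}$ rather than the first sample. First I would observe feasibility, $|T_1\cup T_2|\le \epsilon k/2+(1-\epsilon/2)k=k$. Since $G$ is a subgraph of $\tilde{G}$, the inequality $f_{\tilde{G}}(S)\ge f_G(S)$ holds pointwise, and the $\alpha$-approximation guarantee of $A$ with the shrunk budget yields
\[ f_{\tilde{G}}(T_2)\ge\alpha\cdot\max_{|S|\le(1-\epsilon/2)k}f_{\tilde{G}}(S)\ge\alpha\cdot\max_{|S|\le(1-\epsilon/2)k}f_G(S). \]

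Next I would establish a budget-shrinkage lemma tailored to coverage functions: $\max_{|S|\le(1-\epsilon/2)k}f_G(S)\ge(1-\epsilon/2)f_G(OPT)$. The proof is by the probabilistic method: draw a uniformly random size-$((1-\epsilon/2)k)$ subset $S$ of $OPT$. For each $v\in N_G(OPT)$, with $d_v=|N_G(v)\cap OPT|\ge 1$, the probability that $v$ is covered by $S$ equals $1-\binom{k-d_v}{(1-\epsilon/2)k}/\binom{k}{(1-\epsilon/2)k}$, which is non-decreasing in $d_v$ and attains its minimum value $1-\epsilon/2$ at $d_v=1$. Hence $\E[f_G(S)]\ge(1-\epsilon/2)f_G(OPT)$, so some deterministic $S$ of the required size achieves this bound. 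Chaining the inequalities, $f_{\tilde{G}}(T_2)\ge\alpha(1-\epsilon/2)f_G(OPT)$.

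Finally I would transfer the bound from $\tilde{G}$ back to $G$ using $T_1$. Let $\mathcal{A}$ denote the sample event $\{\bigcup_{u\in L}(N_{\tilde{G}}(u)\setminus N_G(u))\subseteq R_2\}$ and $\mathcal{B}$ the $T_1$-event $\{R_2\subseteq N_G(T_1)\}$; Lemma~\ref{Lemma: R_1 is not error, D_k} gives $\Pr[\mathcal{A}]\ge 1-\delta$, Lemma~\ref{Lemma: R_2 covered by T_1} gives $\Pr[\mathcal{B}]\ge 1-\epsilon/2$, and $T_1$ is drawn independently of the samples (and of $T_2$). On $\mathcal{A}\cap\mathcal{B}$,
\[ N_{\tilde{G}}(T_2)\setminus N_G(T_2)\subseteq\bigcup_{u\in L}(N_{\tilde{G}}(u)\setminus N_G(u))\subseteq R_2\subseteq N_G(T_1), \]
whence $N_{\tilde{G}}(T_2)\subseteq N_G(T_1\cup T_2)$ and therefore $f_{\tilde{G}}(T_2)\le f_G(T_1\cup T_2)$. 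Conditioning on any realization of the samples in $\mathcal{A}$ and using the independence of $\mathcal{B}$ from $T_2$,
\[ \E[f_G(T_1\cup T_2)\mid\mathcal{A}]\ge\Pr[\mathcal{B}]\cdot\E[f_{\tilde{G}}(T_2)\mid\mathcal{A}]\ge(1-\tfrac{\epsilon}{2})\cdot\alpha(1-\tfrac{\epsilon}{2})f_G(OPT)\ge(\alpha-\epsilon)f_G(OPT), \]
where the last step uses $\alpha\le 1$ so that $\alpha(1-\epsilon/2)^2\ge\alpha-\alpha\epsilon\ge\alpha-\epsilon$. Since $\Pr[\mathcal{A}]\ge 1-\delta$, this completes the proof.

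The main obstacle I anticipate is the budget-shrinkage lemma in the middle paragraph: it is the only place where a factor beyond $\alpha$ is lost, and it is precisely what forces the algorithm to split the budget as $(\epsilon k/2,(1-\epsilon/2)k)$, trading an $\epsilon$ in the ratio for the ability to wipe out the overestimates in $\tilde{G}$ using an independent small sample. The remaining manipulations are essentially bookkeeping once the two concentration lemmas and the independence of $T_1$ from the samples are in hand.
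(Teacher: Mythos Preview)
Your proposal is correct and follows essentially the same route as the paper's proof. The only cosmetic difference is in the budget-shrinkage step: the paper invokes submodularity of $f_{\tilde{G}}$ directly to obtain $f_{\tilde{G}}(OPT_{(1-\epsilon/2)k}) \ge (1-\epsilon/2)\,f_{\tilde{G}}(OPT_k)$, whereas you prove the analogous inequality for $f_G$ via an explicit random-subset argument; both yield $f_{\tilde{G}}(T_2)\ge\alpha(1-\epsilon/2)f_G(OPT)$, and the remaining steps (the events $\mathcal{A}$, $\mathcal{B}$, the independence of $T_1$ from the samples, and the final chain $(1-\epsilon/2)^2\alpha\ge\alpha-\epsilon$) coincide with the paper's.
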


\begin{proof}
	By the construction of $\tilde{G}$, $N_G(u) \subseteq N_{\tilde{G}}(u)$ for any $u \in L$.
	Therefore, $G$ is a subgraph of $\tilde{G}$ and $f_{\tilde{G}}(OPT) \geq f_G(OPT)$.
	Let $OPT_k$ be the optimal solution when selecting $k$ elements.
	Since $A$ is an $\alpha$ approximation algorithm and $|T_2| \leq (1 - \epsilon / 2) k$,
	\begin{align*}
	& f_{\tilde{G}}(T_2) \geq \alpha f_{\tilde{G}}(OPT_{(1 - \epsilon / 2) k}) \\
	& \ge \alpha(1-\epsilon/2) f_{\tilde{G}}(OPT_k)  \geq \alpha(1-\epsilon/2) f_{G}(OPT),
	\end{align*}
	where the second inequality above utilizes the submodularity of the coverage functions.
	
	Let $\mathcal{E}$ be the event $R_1 \cap (\cup_{u \in L} N_{\tilde{G}}(u) \backslash N_G(u)) = \emptyset$.
	By Lemma \ref{Lemma: R_1 is not error, D_k}, $\Pr_{S_1,\cdots,S_t\sim\mathcal{D}_k}[\mathcal{E}] \geq 1 - \delta $.
	
	We now assume that event $\mathcal{E}$ holds.
	In this case, we first have $N_{\tilde{G}}(T_2) \backslash N_G(T_2) \subseteq \cup_{u \in L} N_{\tilde{G}}(u) \backslash N_G(u) \subseteq R_2$.
	Next, conditioned on $\mathcal{E}$, we still have the claim in Lemma \ref{Lemma: R_2 covered by T_1} because the sampling of $T_1$ is independent of
	the sampling of $S_1, \ldots, S_t$.
	Therefore, when $\mathcal{E}$ holds, we have
	\begin{align*}
	& \E[f_G(ALG)] = \E[f_G(T_1 \cup T_2)] \\
	&\geq \Pr[R_2 \subseteq N_G(T_1)] \E[f_G(T_1 \cup T_2) \mid R_2 \subseteq N_G(T_1)] \\
	&\ge \Pr[R_2 \subseteq N_G(T_1)] \E[|N_{\tilde{G}}(T_2)| \mid R_2 \subseteq N_G(T_1)] \\
	&\geq  \alpha (1-\epsilon/2)^2 f_G(OPT) \geq (\alpha - \epsilon) f_G(OPT).
	\end{align*}
	This concludes the proof.
\end{proof}

We remark that Lemma \ref{Lemma: d(v) and q_v} (and thus Theorem \ref{Theorem: tight bound under D_k}) holds as long as $k \leq |L|/2$ and $|R| \leq \frac{\epsilon}{2}e^{(\epsilon k)/8}$.
The technical condition $\ln^2 |L| \leq k \leq |L|/2$ and $|R| \leq \frac{\epsilon}{2}|L|^{(\epsilon\ln |L|)/8}$
is indeed a relaxed sufficient condition by setting a lower bound for $k$.
However, it provides a reasonable asymptotic requirement on $k$ and $|R|$ in terms of $|L|$.

\subsubsection{Proof of Lemmas \ref{Lemma: R_1 is not error, D_k} and \ref{Lemma: R_2 covered by T_1}}
\label{Subsection: detailed proofs under D_k}

\begin{lemma}
	\label{lemma: consequence of chernoff bound}
	Assume that $t \geq 2(n/k) \cdot \overline{t}$, where $\overline{t} = \left(\frac{2m}{\epsilon}\right)^{\frac{8}{\epsilon}}\ln \frac{2mn}{\delta}$.
	For fixed $u \in L$, $\Pr[t_u \leq \overline{t}] \leq \delta / (2mn)$.
\end{lemma}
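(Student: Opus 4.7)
The plan is to mirror the proof of Lemma~\ref{Lemma: aux-1} almost verbatim, with two small changes specific to the uniform distribution $\mathcal{D}_k$ and to the larger lower bound $\overline{t}$ we are now using. First I would observe that under $\mathcal{D}_k$ the marginal probability is exact: for any fixed $u \in L$, $p_u = \Pr_{S \sim \mathcal{D}_k}[u \in S] = k/n$. Setting $X_i = \mathbf{1}_{u \in S_i}$, the variables $X_1,\dots,X_t$ are i.i.d.~Bernoulli$(k/n)$ across independent samples (even though the coordinates of a single $S_i$ are not independent, successive samples are), so $t_u = \sum_{i=1}^t X_i$ has $\E[t_u] = tk/n$.

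Next I would combine this exact mean with the hypothesis $t \geq 2(n/k)\overline{t}$ to conclude $\E[t_u] \geq 2\overline{t}$. Applying the Chernoff bound (Lemma~\ref{lem:chernoff}) with $\mu_L = 2\overline{t}$ and deviation parameter $1/2$ yields
\begin{equation*}
\Pr[t_u \leq \overline{t}] \;=\; \Pr\!\left[t_u \leq \left(1-\tfrac{1}{2}\right)\cdot 2\overline{t}\right] \;\leq\; \exp\!\left(-\tfrac{2\overline{t}\cdot (1/2)^2}{2}\right) \;=\; e^{-\overline{t}/4}.
\end{equation*}

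It then remains to verify $e^{-\overline{t}/4} \leq \delta/(2mn)$, i.e.~$\overline{t}\geq 4\ln(2mn/\delta)$. Substituting the definition
\[
\overline{t} \;=\; \left(\tfrac{2m}{\epsilon}\right)^{8/\epsilon}\ln\tfrac{2mn}{\delta},
\]
this reduces to checking $(2m/\epsilon)^{8/\epsilon}\geq 4$, which holds trivially in any nontrivial regime (certainly under the standing assumption $m\geq 1$ and $\epsilon\in(0,1)$, since then $(2m/\epsilon)^{8/\epsilon}\geq 2^8 = 256$). This finishes the lemma. There is no real obstacle here: the only thing to get right is bookkeeping — matching the Chernoff parameter to the lower bound $\overline{t}$ dictated by the union bounds needed downstream (over the $mn$ pairs $(u,v)$ in the proof of Lemma~\ref{Lemma: R_1 is not error, D_k}), which is precisely why the factor $\ln(2mn/\delta)$ appears in the definition of $\overline{t}$.
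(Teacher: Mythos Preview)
Your proposal is correct and matches the paper's own proof essentially line for line: define indicator variables $X_i$, use $p_u = k/n$ under $\mathcal{D}_k$ to get $\E[t_u]\ge 2\overline{t}$, apply the Chernoff bound with deviation $1/2$ to obtain $e^{-\overline{t}/4}$, and then observe this is at most $\delta/(2mn)$ since $\overline{t}\ge 4\ln(2mn/\delta)$. Your added justification that $(2m/\epsilon)^{8/\epsilon}\ge 4$ is a slight elaboration of what the paper leaves implicit, but otherwise the arguments are identical.
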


\begin{proof}
	For fixed $u \in L$, let $X_i = 1$ if $u \in S_i$ and $0$ otherwise.
	Then $t_u = \sum_{i=1}^{t} X_i$.
	Since $p_u = \Pr_{S \sim \mathcal{D}_k}[u \in S] \geq k / n$, $\E[t_u] \geq tk / n = 2\overline{t}$.
	By Chernoff bound (Lemma \ref{lem:chernoff}),
	\[ \Pr[t_u \leq \overline{t}] = \Pr\left[t_u \leq \left(1 - \frac{1}{2} \right) \cdot 2\overline{t} \right] \leq e^{-\overline{t} / 4} \leq \frac{\delta}{2mn}. \]
	The last inequality holds as long as $\overline{t} \geq 4 \ln \frac{2mn}{\delta}$.
\end{proof}

\emph{Proof of Lemma \ref{Lemma: R_1 is not error, D_k}.}
	For node $v \in R_1$ and node $u \in L$ such that $(u, v) \not\in E$,
	we have
	\begin{align*}
	&  \Pr[v \in N_{\tilde{G}}(u) \backslash N_G(u)] \\
	&= \sum_{\ell \geq 0} \Pr[v \in N_{\tilde{G}}(u) \backslash N_G(u), t_u = \ell] \\
	&\leq \sum_{\ell \geq 0} \Pr[t_u = \ell] \cdot q_v^{\ell} \\
	&\leq \sum_{\ell \leq \overline{t}} \Pr[t_u = \ell] \cdot 1 + \sum_{\ell > \overline{t}} \Pr[t_u = \ell] \cdot q_v^{\overline{t}} \\
	&= \Pr[t_u \leq \overline{t}] + \Pr[t_u > \overline{t}] \cdot q_v^{\overline{t}} \\
	&\leq \frac{\delta}{2mn} + \left(1 - \left(\frac{\epsilon}{2m}\right)^{\frac{8}{\epsilon}}\right)^{ \left(\frac{2m}{\epsilon}\right)^{\frac{8}{\epsilon}}\ln \frac{2mn}{\delta}} \\
	&\leq \frac{\delta}{2mn} + \frac{\delta}{2mn} = \frac{\delta}{mn}.
	\end{align*}
	The first inequality holds due to Lemma \ref{Lemma: aux-2}.
	The second to last inequality holds due to Lemma \ref{lemma: consequence of chernoff bound} and Lemma \ref{Lemma: d(v) and q_v}, and the fact that $\overline{t} = \left(\frac{2m}{\epsilon}\right)^{\frac{8}{\epsilon}}\ln \frac{2mn}{\delta}$.
	
	Finally, by union bound, we have
	\begin{align*}
	&  \Pr[R_1 \cap (\cup_{u \in L} N_{\tilde{G}}(u) \backslash N_G(u)) \neq \emptyset] \\
	&= \Pr[\exists\, v \in R_1, u \in L \mbox{ s.t.~} v \in N_{\tilde{G}}(u) \backslash N_G(u) ]  \\
	&\leq \sum_{v \in R_1, u \in L} \Pr[v \in N_{\tilde{G}}(u) \backslash N_G(u) ] \\
	&\leq \sum_{v \in R_1, u \in L} \delta / (mn) \leq \delta.
	\end{align*}
	The proof is completed. \qed

\emph{Proof of Lemma \ref{Lemma: R_2 covered by T_1}.}
	For node $v \in R_2$, $d(v) \geq \frac{2n}{\epsilon k} \ln \frac{2m}{\epsilon}$, then
	\begin{align*}
	& \Pr_{T_1 \sim \mathcal{D}_{\epsilon k/2}}[v \not\in N_G(T_1)] = \frac{{n-d(v) \choose \epsilon k/2}}{{n \choose \epsilon k/2}} \\
	& = \frac{(n-d(v))\cdots(n-d(v) - \epsilon k /2 + 1)}{n \cdots (n - \epsilon k /2 + 1)} \\
	&\leq \left(1 - \frac{d(v)}{n}\right)^{\epsilon k/2} \leq \exp\left(-\frac{\epsilon kd(v)}{2n}\right) \\
	&\leq \frac{\epsilon}{2m}.
	\end{align*}
	By union bound, we have
	\[ \Pr_{T_1 \sim \mathcal{D}_{\epsilon k/2}}[\exists\, v \in R_2, v \not\in N_G(T_1)] \leq \epsilon / 2. \]
	That is, $\Pr_{T_1 \sim \mathcal{D}_{\epsilon k/2}}[R_2 \subseteq N_G(T_1)] \geq 1 - \epsilon / 2$. \qed

\section{Hardness Results for OPSS}
\label{Section: all the conditions are necessary}

\subsection{The $1/2$ Hardness for OPSS under Assumption \ref{assump:dist}}

{\thmlowerbound*}
\begin{proof}
	The distribution $\mathcal{D}$ is constructed as follows.
	Number nodes in $L$ such that $L = \{u_1, \cdots, u_n\}$.
	Let $L_1$ contain the first $k-1$ nodes and $L_2 = L \backslash L_1$.
	Any sample $S$ from $\mathcal{D}$ always contains the $k - 1$ nodes in $L_1$.
	The last node in $S$ is picked uniformly at random from $L_2$.
	It is easy to see that distribution $\mathcal{D}$ satisfies Assumption \ref{assump:dist}.
	
	Next, we construct a class of graphs $G_1, \cdots, G_{k-1}$ as follows such that they cannot be distinguished from the samples.
	(a) For any $i \leq k - 1$ and $u, v \in L$, $N_{G_i}(u) \cap N_{G_i}(v) = \emptyset$;
	(b) for any $i, j \leq k - 1$ and $u \in L_2$, $|N_{G_i}(u)| = r$ and $N_{G_i}(u) = N_{G_j}(u)$;
	(c) for any $i \leq k - 1$ and $u \in L_1$ with $u \neq u_i$, $N_{G_i}(u) = \emptyset$;
	(d) for any $i \le k-1$, and $u_i$ covers the same set of $(k-1)r$ nodes across different graph $G_i$'s.
	Clearly, the optimal solution $OPT_i$ of $G_i$ contains node $u_i$ and arbitrary $k-1$ nodes in $L_2$.
	Thus $f_{G_i}(OPT_i) = 2(k-1)r$.
	
	We prove the desired ratio by a probabilistic argument.
	Let $B$ be any (randomized) OPSS algorithm and $T$ be the solution it returns.
	Let $G$ be a graph drawn uniformly at random from $G_1, \cdots, G_{k-1}$.
	Since any sample of $\mathcal{D}$ always return the first $k-1$ nodes, and the union coverage of these $k-1$ nodes
	is always the same across different graphs $G_i$'s,
	solution $T$ is independent of the random choice of $G$,
	although it may be dependent on the random choices in the samples from nodes in $L_2$.
	Suppose the solution $T$ of $B$ is fixed.
	Let $x = |T \cap L_1|, 0 \leq x \leq k - 1$.
	By the above argument that $T$ and $G$ are independent, we have that
	the expected number of nodes covered by $T$ is
	\begin{eqnarray*}
		& & \E_G[f_G(T)\mid \mbox{solution $T$ of $B$ are fixed}] \\
		&=& \frac{x}{k-1}(k-1)r + (k - x)r = kr.
	\end{eqnarray*}
	As a result, $\E_{B,G}[f_G(T)] = kr$, which implies there must be a $G$ from $G_1, \cdots, G_{k-1}$ such that $\E_{B}[f_G(T)] \leq kr$.
	Thus $\E_{B}[f_G(T)] / f_G(OPT) \leq k/(2(k-1)) = 1 / 2 + o(1)$.
\end{proof}

\subsection{Assumption \ref{assump:dist} Is Necessary}

In this section, we show that the three conditions in Assumption \ref{assump:dist} are necessary, in the sense that dropping any one of them would result in no constant approximation for the OPSS problem.
The necessity of Assumption \ref{assump:dist}.2 is relatively trivial, and we include it in \OnlyInShort{the full version.}\OnlyInFull{Appendix~\ref{app:omittedproofs}.}

\paragraph{Assumption \ref{assump:dist}.1 Is Necessary.}
For the distribution which always returns $L$, no reasonable algorithm exists for the OPSS problem.
Thus it is easy to see that we cannot drop Assumption \ref{assump:dist}.1 without any restriction.
Instead, we show that even if we relax assumption \ref{assump:dist}.1 a little bit, no constant approximation algorithm exists.

\begin{theorem}
	\label{Theorem: assumption 1 is necessary}
	Let $\mathcal{D}_r$ be the uniform distribution over all subsets of size $r = \omega(k \log^2 |L|)$.
	The coverage functions are not $\alpha$-optimizable under OPSS for any constant $\alpha$ in the cardinality constraint $\mathcal{M}_{\leq k}$ over $\mathcal{D}_r$.
\end{theorem}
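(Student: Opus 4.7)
The plan is to exhibit a distribution over bipartite graphs in which (i) every sample $S\sim\mathcal{D}_r$ covers all of $R$ with overwhelming probability, so the labels carry no information about the hidden structure, and (ii) any $k$-set $T$ chosen without knowledge of this structure has expected coverage $o(k)$, while the optimum value is exactly $k$. Averaging then produces, for any fixed algorithm, a single graph on which its expected approximation ratio is $o(1)$, ruling out any constant approximation.

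Concretely, I will take $R=\{v_1,\dots,v_k\}$ and parameterize each instance by a uniformly random size-$k$ set $L_*\subseteq L$, a uniformly random bijection $\pi:[k]\to L_*$ identifying a ``good'' node $u_i^*=\pi(i)$ for each $v_i$, and uniformly random size-$d$ ``dummy'' sets $\hat D_1,\dots,\hat D_k\subseteq L\setminus L_*$, setting $N_G(v_i)=\{u_i^*\}\cup \hat D_i$. I will choose $d=\Theta(|L|\ln|L|/r)$; the hypothesis $r=\omega(k\log^2|L|)$ places this $d$ in the admissible window $\Omega(|L|\log|L|/r)\le d\le O(|L|/k)$, so the dummy sets fit inside $L\setminus L_*$. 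By construction, the optimum satisfies $f_G(\mathrm{OPT})=f_G(L_*)=k$ for every instance in the support.

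The indistinguishability step uses the standard bound
\[
\Pr_{S\sim\mathcal{D}_r}[\hat D_i\cap S=\emptyset]\le\bigl(1-d/|L|\bigr)^r\le e^{-dr/|L|}\le |L|^{-C}
\]
for any desired constant $C$; a union bound over the $k$ elements of $R$ and the polynomially many samples then shows that, except with probability $|L|^{-\Omega(1)}$, every labelled sample is simply $(S_j,R)$, which is a function of $S_j$ alone and does not depend on $(L_*,\pi,\hat D)$. Let $T_0$ denote the output the algorithm would produce on these ``clean'' labels; then $T=T_0$ on the high-probability event, and $T_0$ is independent of the hidden instance. For any fixed $T_0$ with $|T_0|\le k$, averaging over the instance randomness gives $\Pr[u_i^*\in T_0]=|T_0|/|L|$ and $\Pr[\hat D_i\cap T_0\ne\emptyset]\le d|T_0|/(|L|-k)$, so by linearity of expectation
\[
\E\bigl[f_G(T_0)\bigr]\le k|T_0|/|L|+dk|T_0|/(|L|-k)=O(k^2 d/|L|)=O(k^2\ln|L|/r)=o(k).
\]

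Combining the two pieces and absorbing the negligible contribution of the ``bad'' event gives $\E_{G,\text{samples},B}[f_G(T)]=o(k)=o(f_G(\mathrm{OPT}))$ uniformly in the algorithm $B$, so some graph $G^*$ in the support satisfies $\E_{\text{samples},B}[f_{G^*}(T)]=o(f_{G^*}(\mathrm{OPT}))$; Markov's inequality (using $f_{G^*}(T)\le k$ always) then shows $B$ cannot meet the OPSS success guarantee for any constant $\alpha$. The main obstacle I foresee is precisely this bookkeeping---showing that conditional on the clean event the algorithm's output is effectively independent of the hidden instance, so that it is legitimate to treat $T$ as fixed when averaging $f_G(T)$ over random $(L_*,\pi,\hat D)$---together with verifying that $d=\Theta(|L|\ln|L|/r)$ really sits inside the narrow window $\Omega(|L|\log|L|/r)\le d\le O(|L|/k)$ which the hypothesis $r=\omega(k\log^2|L|)$ just barely permits.
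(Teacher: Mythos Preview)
Your argument is correct, and the bookkeeping you flag as the main obstacle goes through: defining $T_0$ as the output on the deterministic input $\{(S_j,R)\}_j$ makes $T_0$ a function of the samples and the algorithm's coins only, hence independent of the random instance $(L_*,\pi,\hat D)$, and the bound $\E_G[f_G(t_0)]\le k^2/|L|+dk^2/(|L|-k)$ is uniform over all feasible $t_0$, so taking expectations is legitimate. The parameter window also checks out: $r=\omega(k\log^2|L|)$ forces $k=o(|L|/\log^2|L|)$ (else $r>|L|$), so $d=\Theta(|L|\log|L|/r)=o(|L|/(k\log|L|))$ sits comfortably below $|L|-k$ and makes $dk/|L|=o(1)$.

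That said, the paper's construction is considerably simpler and avoids the dummy-node mechanism entirely. It partitions $L$ into $p=r/\log^2|L|=\omega(k)$ equal blocks $L_1,\dots,L_p$ and considers the $p$ graphs $G_1,\dots,G_p$ in which every node of $L_i$ covers all of $R$ while all other nodes cover nothing. Because each block has size $|L|/p=|L|\log^2|L|/r$, a single calculation shows any size-$r$ sample intersects every block with probability $1-e^{-\log^2|L|}$, so with high probability all labels are $R$ simultaneously for every $G_i$. The payoff is that the coverage analysis becomes a one-line counting argument: a $k$-set $T$ meets at most $k$ of the $p$ blocks, so over a uniformly random $G_i$ it hits the good block with probability at most $k/p=o(1)$, and otherwise covers nothing. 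Your approach instead keeps each good node a singleton, which forces you to introduce the dummy sets $\hat D_i$ to make the labels uninformative, and then obliges you to separately bound the accidental coverage $T_0$ might pick up through the dummies. Both routes yield the same $o(1)$ ratio; the paper's buys a cleaner analysis with no parameter tuning, while yours is a bit more flexible (it already works under the weaker hypothesis $r=\omega(k\log|L|)$) at the cost of the extra bookkeeping.
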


\begin{proof}
	Clearly, $\mathcal{D}_r$ satisfies Assumption \ref{assump:dist}.2 and \ref{assump:dist}.3, but not Assumption \ref{assump:dist}.1.
	Let $|L| = n$, $|R| = m = \mbox{poly}(n)$, and $p = r / \log^2 n = \omega(k)$.
	We first construct a class of graphs $G_1, \cdots, G_p$ where $G_i = (L, R, E_i)$.
	Let $L$ be partitioned into disjoint subsets $\{L_1, \cdots, L_p\}$, each with $q = n / p$ nodes.
	For graph $G_i$, $L_i$ is \emph{good} in that for any $u \in L_i$, $N_{G_i}(u) = R$;
	each $L_j$ with $j \neq i$ is \emph{bad} in that $N_{G_i}(L_j) = \emptyset$.
	Clearly, the optimal solution of any graph covers $m$ nodes.
	
	Next we show that with high probability $G_1, \cdots, G_p$ cannot be distinguished from the samples.
	For $L_i$ of $G_i$,
	\[ \Pr_{S \sim \mathcal{D}_r}[S \cap L_i = \emptyset ] = \frac{{n - q \choose r}}{{n \choose r}} \leq \left(1 - \frac{q}{n}\right)^r \leq e^{-\log^2 n}. \]
	Thus for $t = \mbox{poly}(|L|, |R|) = \mbox{poly}(n)$ samples, by the union bound,
	\[ \Pr_{S_1,\cdots,S_t\sim\mathcal{D}_r}[\exists\, j \in [t] \mbox{ s.t. } S_j \cap L_i = \emptyset] \leq te^{-\log^2 n} = o(1). \]
	Hence with probability $1 - o(1)$, $N_{G_i}(S_j) = R$ for all $S_j$ and all $G_i$.
	Below we assume this is exactly the case for all $G_i$'s, which means no algorithm can distinguish these $G_i$'s from the samples.
	
	We prove the desired ratio by a probabilistic argument.
	Let $B$ be any (randomized) algorithm and $T$ be the solution it returns.
	Let $G$ be a graph drawn uniformly at random from $G_1, \cdots, G_p$ and $L_g$ be the good part of $G$.
	Suppose the solution $T$ of $B$ is fixed.
	Since $|T| \leq k$, it can touch at most $k$ $L_j$'s.
	Thus $\Pr_G[T \cap L_g \neq \emptyset \mid \mbox{the solution $T$ of } B \mbox{ is fixed}] \leq k / p = o(1)$.
	Since $B$ cannot distinguish those graphs from the samples, the solution $T$ of $B$ is independent of the random graph $G$.
	As a result, $\Pr_{B, G}[T \cap L_g \neq \emptyset] = o(1)$ and $\E_{B, G}[f_G(T)] = o(1) \cdot m$, which implies there must be a $G$ from $G_1, \cdots, G_p$ such that $\E_{B}[f_G(T)] = o(1) \cdot m$.
	Thus $\E_{B}[f_G(T)] / f_G(OPT) = o(1)$ with probability $1 - o(1)$.
\end{proof}

As a complement of Theorem \ref{Theorem: assumption 1 is necessary}, we show that as long as $r=O(k)$, we have a constant approximation algorithm for the OPSS problem.
\OnlyInFull{The proof is included in Appendix~\ref{app:omittedproofs}.}\OnlyInShort{The proof is included in the full version.}
\begin{restatable}{theorem}{thmOkconstant} \label{thm:Okconstant}
	Let $\mathcal{D}_r$ be the uniform distribution over all subsets of size $r = O(k)$.
	The coverage functions are $\alpha$-optimizable under OPSS for some constant $\alpha$ in the cardinality constraint $\mathcal{M}_{\leq k}$ over $\mathcal{D}_r$.
\end{restatable}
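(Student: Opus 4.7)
The plan is to adapt Algorithm~\ref{Algorithm: main algorithm} to the regime where samples have size $r$ but with $r \le Ck$ for some constant $C$. I would modify the algorithm as follows. First, construct the surrogate graph $\tilde{G}$ exactly as in Algorithm~\ref{Algorithm: main algorithm}, so $N_{\tilde G}(u) = \cap_{i:u\in S_i} N_G(S_i)$. Second, let $T_2 = A(\tilde G, k)$ as before. Third, since $|S_1|=r$ may exceed $k$, redefine $T_1$ to be a uniformly random $k$-subset of $S_1$, which is always feasible. Fourth, return $T_1$ with probability $r/(r+k)$ and $T_2$ with probability $k/(r+k)$. (If $r\le k$ the original algorithm already applies since Assumption~\ref{assump:dist}.1 is then satisfied, so assume $r>k$.)

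The first step of the analysis is to observe that Lemma~\ref{Lemma: lower bound of a sample} carries over verbatim. A careful reading of its proof shows that feasibility (Assumption~\ref{assump:dist}.1) is never used: only Lemmas~\ref{Lemma: aux-1} and~\ref{Lemma: aux-2} are invoked, which rest solely on Assumptions~\ref{assump:dist}.2 and~\ref{assump:dist}.3. The distribution $\mathcal{D}_r$ satisfies both (with $p_u = r/|L| \ge 1/|L|$ so $c=1$ works, and $\mathcal{D}_r$ is negatively correlated as already noted after Definition~\ref{Definition: negative correlation}). Hence with probability at least $1-\delta$ we have $\cup_{u\in L}(N_{\tilde G}(u)\setminus N_G(u)) \subseteq N_G(S_1)$, which gives $f_{\tilde G}(T_2) \le f_G(T_2) + f_G(S_1)$ by the same chain of inequalities used inside the proof of Theorem~\ref{Theorem: guarantee of main algorithm}.

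The second step is a random-subsampling lemma: $\E[f_G(T_1)\mid S_1] \ge (k/r)\, f_G(S_1)$. For every $v\in N_G(S_1)$ with $d_v = |N_G(v)\cap S_1| \ge 1$, the probability that $v$ is not covered by $T_1$ equals $\binom{r-d_v}{k}/\binom{r}{k}$, which is at most $\binom{r-1}{k}/\binom{r}{k} = (r-k)/r$. Summing $\Pr[v\in N_G(T_1)] \ge k/r$ over all $v\in N_G(S_1)$ gives the claim. Combining the two steps, conditional on the good event of Lemma~\ref{Lemma: lower bound of a sample},
\[ f_G(T_2) + \tfrac{r}{k}\E[f_G(T_1)\mid S_1] \;\ge\; f_G(T_2) + f_G(S_1) \;\ge\; f_{\tilde G}(T_2) \;\ge\; \alpha\, f_G(OPT), \]
and then
\[ \E[f_G(\mathrm{ALG})] \;=\; \tfrac{k}{r+k}\bigl(\E[f_G(T_2)] + \tfrac{r}{k}\E[f_G(T_1)]\bigr) \;\ge\; \tfrac{k}{r+k}\,\alpha\, f_G(OPT) \;\ge\; \tfrac{\alpha}{C+1}\, f_G(OPT), \]
a constant fraction of the optimum since $r \le Ck$.

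The main obstacle is really just conceptual: recognizing that (i) Lemma~\ref{Lemma: lower bound of a sample} is robust to dropping feasibility, and (ii) when samples exceed the budget by only a constant factor, a uniformly random $k$-subset of a sample preserves a constant fraction of its coverage, which is exactly what is needed to turn the ``$T_1 = S_1$'' trick into a feasible solution. Both pieces are mild adaptations of the arguments already developed for Theorem~\ref{Theorem: guarantee of main algorithm}; no new probabilistic machinery is required.
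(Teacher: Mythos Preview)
Your proposal is correct. The paper takes a more modular route: rather than modifying Algorithm~\ref{Algorithm: main algorithm}'s internals, it runs the algorithm as a black box with budget $r$ instead of $k$. Since $\mathcal{D}_r$ then satisfies all three parts of Assumption~\ref{assump:dist} for the constraint $\mathcal{M}_{\le r}$, Theorem~\ref{Theorem: guarantee of main algorithm} applies verbatim and yields a (possibly infeasible) set of size at most $r$ that is $\tfrac{\alpha}{2}$-approximate to $\mathrm{OPT}_r \ge \mathrm{OPT}_k$. The paper then takes a uniformly random $k$-subset of this output and invokes the general submodular subsampling bound $\E[f(T(p))] \ge p\,f(T)$ (Lemma~\ref{Lemma: property of submodular functions}) to retain a $k/r \ge 1/C$ fraction of the value, for an overall $\tfrac{\alpha}{2C}$ approximation. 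Your approach instead keeps $T_2 = A(\tilde G, k)$ at the original budget, subsamples only $S_1$, and rebalances the mixing probabilities; this requires you to verify that Lemma~\ref{Lemma: lower bound of a sample} never uses feasibility (indeed it does not), whereas the paper sidesteps that check by making feasibility hold at budget $r$. Both proofs rest on the same two ingredients, and your assembly in fact yields the slightly sharper constant $\tfrac{\alpha}{C+1}$ versus the paper's $\tfrac{\alpha}{2C}$.
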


\paragraph{Assumption \ref{assump:dist}.3 Is Necessary.}
Assumption \ref{assump:dist}.3 plays a central role in the analysis of our algorithms.
Thus it is reasonable to consider its necessity.
In this section we show that this is exactly the case.

\begin{theorem}
	There is a distribution $\mathcal{D}$, which satisfies Assumption \ref{assump:dist}.1 and \ref{assump:dist}.2, but not Assumption \ref{assump:dist}.3,
	such that coverage functions are not $\alpha$-optimizable under OPSS for any constant $\alpha$ in the cardinality constraint $\mathcal{M}_{\leq k}$ over $\mathcal{D}$.
\end{theorem}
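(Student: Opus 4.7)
My plan is to construct a distribution $\mathcal{D}$ with perfect positive correlation inside disjoint ``blocks'', together with a family of coverage graphs that the samples cannot distinguish. Partition $L$ into $m$ blocks $L_1,\dots,L_m$ of size $k$ each (so $|L|=mk$), and let $\mathcal{D}$ be the uniform distribution on $\{L_1,\dots,L_m\}$. Assumption~\ref{assump:dist}.1 holds since $|S|=k$ for every $S$ in the support, and Assumption~\ref{assump:dist}.2 holds since $\Pr[u\in S]=1/m\ge 1/|L|$; however, Assumption~\ref{assump:dist}.3 fails because for two nodes $u,u'$ in the same block, $\E[(1-X_u)(1-X_{u'})]=1-1/m > (1-1/m)^2 = \E[1-X_u]\E[1-X_{u'}]$.

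Next I build an indistinguishable family of hard instances. Set $R=\{v_1,\dots,v_m\}$ and index the block $L_j$ as $\{u_{j,1},\dots,u_{j,k}\}$. For each $i\in[k]$, define $G_i=(L,R,E_i)$ by $E_i=\{(u_{j,i},v_j):j\in[m]\}$, i.e., in $G_i$ only the $i$-th node of each block is wired, to the single ``private'' element $v_j$. Because every sample is a whole block $L_j$, we get $N_{G_i}(L_j)=\{v_j\}$ regardless of $i$, so the sample sequence $\{S_\ell,N_{G_i}(S_\ell)\}_{\ell=1}^{t}$ has the same joint distribution across $G_1,\dots,G_k$, and no algorithm can extract any information about $i$ from the samples. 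On the other hand, selecting the $i$-th element of any $k$ distinct blocks covers all $k$ private elements, so $f_{G_i}(\mathrm{OPT})\ge k$.

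I close with a Yao-style averaging argument. Fix any (randomized) OPSS algorithm $B$, let $T$ be its output with $|T|\le k$, and draw $i$ uniformly from $[k]$; by the indistinguishability above, $T$ is independent of $i$. For any fixed $T$, $f_{G_i}(T)=|\{j:u_{j,i}\in T\}|$, so
\[
\E_i[f_{G_i}(T)]=\sum_{j=1}^{m}\frac{|T\cap L_j|}{k}=\frac{|T|}{k}\le 1.
\]
Hence some $i^\ast$ satisfies $\E_B[f_{G_{i^\ast}}(T)]\le 1$, giving an approximation ratio of at most $1/k$ on $G_{i^\ast}$. Choosing $k=m=\sqrt{|L|}$ (so $k=\omega(1)$) drives the ratio below every constant $\alpha>0$ for sufficiently large $|L|$, and the argument extends from expectation to the high-probability formulation of OPSS by a union bound over $\delta$ and a standard Markov step. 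The only delicate point I foresee is asserting that both the sampled sets and their revealed neighborhoods are truly identically distributed across the whole family $\{G_i\}$ and that the randomness of $B$ is independent of $i$, but both reduce to the clean observation that $N_{G_i}(L_j)=\{v_j\}$ is uniform in $i$.
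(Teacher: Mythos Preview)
Your proposal is correct and follows essentially the same approach as the paper: partition $L$ into size-$k$ blocks, let $\mathcal{D}$ sample a uniformly random block (so samples always reveal only the block's total neighborhood), hide a single ``good'' vertex per block, and use a Yao averaging argument to get a $1/k$ ratio. The paper's construction differs only cosmetically: it lets each block cover $r$ private elements rather than one, and it randomizes the good position independently in each block (yielding $k^{n/k}$ graphs) instead of tying all blocks to a single shared index $i\in[k]$ as you do; your smaller family of $k$ graphs is a clean simplification that produces exactly the same expected-value computation $\E_i[f_{G_i}(T)]=\sum_j |T\cap L_j|/k\le 1$ and the same $1/k$ bound.
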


\begin{proof}
	The distribution $\mathcal{D}$ is constructed as follows.
	Let $L$ be partitioned into $n / k$ disjoint subsets $L_1, \cdots, L_{n/k}$; each $L_j$ contains exactly $k$ nodes.
	A sample $S \sim \mathcal{D}$ is drawn uniformly at random from $L_1, \cdots, L_{n/k}$.
	Clearly, this distribution satisfies Assumption \ref{assump:dist}.1 and \ref{assump:dist}.2, but it is not negatively correlated.
	
	Let $G$ be a random graph constructed with the following properties:
	(a) $N_G(L_i) \cap N_G(L_j) = \emptyset$ for any $i \neq j$;
	(b) $|N_G(L_i)| = r$ for all $i \leq n / k$;
	(c) within each $L_i$, there is a node $u^i$ such that $N_G(u^i) = N_G(L_i)$;
	(d) for node $u \in L_i$ with $u \neq u^i$, $N_G(u) = \emptyset$;
	(e) node $u^i$ is determined by selecting a uniformly random node from $L_i$.
	All the possible outcomes of $G$ form the graph class $\mathcal{G}$.
	It is easy to see graphs from $\mathcal{G}$ cannot be distinguished from the samples.
	The optimal solution of any graph from $\mathcal{G}$ covers $kr$ nodes.
	
	Now we prove the desired ratio by a probabilistic argument.
	Let $B$ be any (randomized) algorithm and $T$ be the solution it returns.
	Suppose the solution $T$ of $B$ is fixed. Then
	\begin{align*}
	&  \E_G[f_G(T) \mid \mbox{the solution $T$ of $B$ is fixed}] \\
	&= \sum_{j=1}^{n/k} \Pr_G[T \mbox{ contains } u^j \mbox{ of } L_j] \cdot r
	= \sum_{j=1}^{n/k} \frac{|T \cap L_j|}{k}\cdot r = r.
	\end{align*}
	Since $B$ cannot distinguish those graphs from the samples, the solution $T$ of $B$ is independent of the random graph $G$.
	As a result, $\E_{B,G}[f_G(T)] = r$, which implies there must be
	some fixed $G$ in the graph class $\mathcal{G}$ such that $\E_{B}[f_G(T)] \leq r$.
	Thus $\E_{B}[f_G(T)] / f_G(OPT) \leq 1 / k$.
\end{proof}

\section{Future Work}

One immediate question is to close the $[\frac{1}{2}(1-e^{-1}), \frac{1}{2}]$ gap of polynomial time algorithms under Assumption \ref{assump:dist} in our model.
Besides, it is interesting to define suitable structured samples for other set functions and investigate the possibility of optimization for those functions.
One concrete example of such functions is the probabilistic coverage function where each edge $(u,v)$ in the bipartite graph
$G=(L,R,E)$ has a probability indicating the probability that $u$ covers $v$.

\section*{Acknowledgements}

This work was supported in part by the National Natural Science Foundation of China Grants No.~61832003, 61761136014, 61872334,  the 973 Program of China Grant No.~2016YFB1000201, K.C.~Wong Education Foundation.


\bibliography{OPSS}

\begin{thebibliography}{16}
\providecommand{\natexlab}[1]{#1}
\providecommand{\url}[1]{\texttt{#1}}
\expandafter\ifx\csname urlstyle\endcsname\relax
  \providecommand{\doi}[1]{doi: #1}\else
  \providecommand{\doi}{doi: \begingroup \urlstyle{rm}\Url}\fi

\bibitem[Badanidiyuru et~al.(2012)Badanidiyuru, Dobzinski, Fu, Kleinberg,
  Nisan, and Roughgarden]{BadanidiyuruDFKNR12}
Badanidiyuru, A., Dobzinski, S., Fu, H., Kleinberg, R., Nisan, N., and
  Roughgarden, T.
\newblock Sketching valuation functions.
\newblock In \emph{Proceedings of the Twenty-Third Annual {ACM-SIAM} Symposium
  on Discrete Algorithms, {SODA} 2012, Kyoto, Japan, January 17-19, 2012}, pp.\
   1025--1035, 2012.

\bibitem[Balcan \& Harvey(2011)Balcan and Harvey]{BalcanH11}
Balcan, M. and Harvey, N. J.~A.
\newblock Learning submodular functions.
\newblock In \emph{Proceedings of the 43rd {ACM} Symposium on Theory of
  Computing, {STOC} 2011, San Jose, CA, USA, 6-8 June 2011}, pp.\  793--802,
  2011.

\bibitem[Balkanski \& Singer(2017)Balkanski and Singer]{BalkanskiS17}
Balkanski, E. and Singer, Y.
\newblock Minimizing a submodular function from samples.
\newblock In \emph{Advances in Neural Information Processing Systems 30: Annual
  Conference on Neural Information Processing Systems 2017, 4-9 December 2017,
  Long Beach, CA, {USA}}, pp.\  814--822, 2017.

\bibitem[Balkanski et~al.(2016)Balkanski, Rubinstein, and
  Singer]{BalkanskiRS16}
Balkanski, E., Rubinstein, A., and Singer, Y.
\newblock The power of optimization from samples.
\newblock In \emph{Advances in Neural Information Processing Systems 29: Annual
  Conference on Neural Information Processing Systems 2016, December 5-10,
  2016, Barcelona, Spain}, pp.\  4017--4025, 2016.

\bibitem[Balkanski et~al.(2017)Balkanski, Rubinstein, and
  Singer]{BalkanskiRS17}
Balkanski, E., Rubinstein, A., and Singer, Y.
\newblock The limitations of optimization from samples.
\newblock In \emph{Proceedings of the 49th Annual {ACM} {SIGACT} Symposium on
  Theory of Computing, {STOC} 2017, Montreal, QC, Canada, June 19-23, 2017},
  pp.\  1016--1027, 2017.

\bibitem[Block et~al.(1982)Block, Savits, and Shaked]{BlockSS82}
Block, H.~W., Savits, T.~H., and Shaked, M.
\newblock Some concepts of negative dependence.
\newblock \emph{The Annals of Probability}, 10\penalty0 (3):\penalty0 765--772,
  1982.

\bibitem[Feige(1998)]{Feige98}
Feige, U.
\newblock A threshold of ln \emph{n} for approximating set cover.
\newblock \emph{J. {ACM}}, 45\penalty0 (4):\penalty0 634--652, 1998.

\bibitem[Feige et~al.(2011)Feige, Mirrokni, and Vondr{\'{a}}k]{FeigeMV11}
Feige, U., Mirrokni, V.~S., and Vondr{\'{a}}k, J.
\newblock Maximizing non-monotone submodular functions.
\newblock \emph{{SIAM} J. Comput.}, 40\penalty0 (4):\penalty0 1133--1153, 2011.

\bibitem[Ghosh(1981)]{Ghosh81}
Ghosh, M.
\newblock Multivariate negative dependence.
\newblock \emph{Communications in Statistics-Theory and Methods}, 10\penalty0
  (4):\penalty0 307--337, 1981.

\bibitem[Joag-Dev \& Proschan(1983)Joag-Dev and Proschan]{JoagP83}
Joag-Dev, K. and Proschan, F.
\newblock Negative association of random variables with applications.
\newblock \emph{The Annals of Statistics}, 11\penalty0 (1):\penalty0 286--295,
  1983.

\bibitem[Jogdeo \& Patil(1975)Jogdeo and Patil]{JogdeoP75}
Jogdeo, K. and Patil, G.~P.
\newblock Probability inequalities for certain multivariate discrete
  distribution.
\newblock \emph{Sankhy{\=a}: The Indian Journal of Statistics, Series B}, pp.\
  158--164, 1975.

\bibitem[Karlin \& Rinott(1980)Karlin and Rinott]{karlinR80}
Karlin, S. and Rinott, Y.
\newblock Classes of orderings of measures and related correlation inequalities
  {II}. multivariate reverse rule distributions.
\newblock \emph{Journal of Multivariate Analysis}, 10\penalty0 (4):\penalty0
  499--516, 1980.

\bibitem[Mitzenmacher \& Upfal(2005)Mitzenmacher and Upfal]{MitzenmacherU05}
Mitzenmacher, M. and Upfal, E.
\newblock \emph{Probability and Computing: Randomized Algorithms and
  Probabilistic Analysis}.
\newblock Cambridge University Press, 2005.
\newblock ISBN 978-0-521-83540-4.

\bibitem[Nemhauser et~al.(1978)Nemhauser, Wolsey, and Fisher]{NemhauserWF78}
Nemhauser, G.~L., Wolsey, L.~A., and Fisher, M.~L.
\newblock An analysis of approximations for maximizing submodular set functions
  - {I}.
\newblock \emph{Math. Program.}, 14\penalty0 (1):\penalty0 265--294, 1978.

\bibitem[Rosenfeld et~al.(2018)Rosenfeld, Balkanski, Globerson, and
  Singer]{RosenfeldBGS18}
Rosenfeld, N., Balkanski, E., Globerson, A., and Singer, Y.
\newblock Learning to optimize combinatorial functions.
\newblock In \emph{Proceedings of the 35th International Conference on Machine
  Learning, {ICML} 2018, Stockholmsm{\"{a}}ssan, Stockholm, Sweden, July 10-15,
  2018}, pp.\  4371--4380, 2018.

\bibitem[Valiant(1984)]{Valiant84}
Valiant, L.~G.
\newblock A theory of the learnable.
\newblock \emph{Commun. {ACM}}, 27\penalty0 (11):\penalty0 1134--1142, 1984.

\end{thebibliography}
\bibliographystyle{icml2020}

\clearpage

\OnlyInFull{
	\appendix
	\section*{Appendix}
	\section{Omitted Proofs}
	\label{app:omittedproofs}

We first prove Theorem~\ref{thm:Okconstant}.
We utilize the following lemma,, which is easy to prove for coverage functions or can be derived from Lemma 2.2 in Feige et al.~\yrcite{FeigeMV11}.
\begin{lemma}
	\label{Lemma: property of submodular functions}
	Let $f: 2^L \rightarrow \mathbb{R}$ be a coverage function. For any set $T\subseteq L$, let $T(p)$ be a random subset of $T$ where each element appears with probability at least $p$ (not necessarily independently).
	Then $\mathbf{E}[f(T(p))] \geq p \cdot f(T)$.
\end{lemma}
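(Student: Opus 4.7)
The plan is to exploit the fact that a coverage function decomposes as a sum of indicators over the covered side $R$, so the inequality reduces to a statement about how likely each right-node $v$ stays covered when we subsample $T$.

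First I would write $f_G(S) = |N_G(S)| = \sum_{v \in R} \mathbf{1}[S \cap N_G(v) \neq \emptyset]$, where by slight abuse of notation $N_G(v)$ for $v \in R$ denotes the set of neighbors of $v$ in $L$. Applying this to the random subset $T(p)$ and taking expectations by linearity,
\[
\mathbf{E}[f_G(T(p))] = \sum_{v \in R} \Pr[T(p) \cap N_G(v) \neq \emptyset].
\]
Only the terms with $v \in N_G(T)$ matter for comparison, since for $v \notin N_G(T)$ we have $T \cap N_G(v) = \emptyset$, and they contribute $0$ to $f_G(T)$ as well.

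Next, for each $v \in N_G(T)$ I would pick an arbitrary witness $u^{\star}(v) \in T \cap N_G(v)$. Since $\{u^{\star}(v) \in T(p)\}$ is a subset of the event $\{T(p) \cap N_G(v) \neq \emptyset\}$, monotonicity of probability and the hypothesis that each element of $T$ is retained with probability at least $p$ give
\[
\Pr[T(p) \cap N_G(v) \neq \emptyset] \geq \Pr[u^{\star}(v) \in T(p)] \geq p.
\]
Crucially, this single-witness bound does not require any independence assumption among the indicators $\mathbf{1}[u \in T(p)]$ for $u \in T$, which matches the lemma's allowance that the inclusion probabilities need not be independent.

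Summing over $v \in N_G(T)$ yields
\[
\mathbf{E}[f_G(T(p))] \geq \sum_{v \in N_G(T)} p = p \cdot |N_G(T)| = p \cdot f_G(T),
\]
completing the argument. There is no real obstacle here; the only subtle point is to resist the temptation to use inclusion-exclusion or independence and instead rely on the one-element lower bound on coverage, which is what allows the statement to hold for arbitrary (possibly correlated) marginals at least $p$.
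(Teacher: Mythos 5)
Your proof is correct. The paper itself does not write out a proof of this lemma; it only remarks that it is ``easy to prove for coverage functions'' or can be derived from Lemma~2.2 of Feige et al., which is a more general statement about submodular functions under (possibly correlated) random subsampling. Your argument --- decomposing $f_G$ as $\sum_{v\in R}\mathbf{1}[T(p)\cap N_G(v)\neq\emptyset]$, fixing a single witness $u^\star(v)\in T\cap N_G(v)$ for each covered $v$, and lower-bounding the coverage probability by the marginal retention probability $\Pr[u^\star(v)\in T(p)]\ge p$ --- is exactly the elementary coverage-specific route the paper alludes to, and it is complete: the witness bound uses only marginals, so no independence is needed, and the uncovered right-nodes contribute nonnegatively on the left and zero on the right. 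Compared with invoking the general submodular lemma of Feige et al., your argument buys a short self-contained proof but applies only to coverage functions; the citation route gives the same bound for arbitrary monotone submodular $f$ at the cost of importing external machinery.
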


{\thmOkconstant*}
\begin{proof}
	The algorithm is as follows.
	It first invokes Algorithm \ref{Algorithm: main algorithm} to obtain a solution $T_1$ with $|T_1| \leq r$.
	Then let $T_2$ be a uniformly random subset of $T_1$ with size $k$.
	The algorithm returns $T_2$ as a solution.
	
	By Theorem \ref{Theorem: guarantee of main algorithm}, $T_1$ is a constant approximation of the optimal solution with high probability.
	On the other hand, for any $u \in L$, $\Pr[u \in T_2]  = k / |T_1| \geq k / r$.
	By Lemma \ref{Lemma: property of submodular functions}, $\E[f(T_2)] \geq (k/r)f(T_1)$.
	Since $r = O(k)$, $k/r$ is a constant.
	Thus $T_2$ is still a constant approximation of the optimal solution with high probability.
\end{proof}
	
\paragraph{Assumption \ref{assump:dist}.2 Is Necessary.}

\begin{theorem}
	There is a distribution $\mathcal{D}$, which satisfies Assumption \ref{assump:dist}.1 and \ref{assump:dist}.3, but not Assumption \ref{assump:dist}.2,
	such that coverage functions are not $\alpha$-optimizable under OPSS for any constant $\alpha$ in the cardinality constraint $\mathcal{M}_{\leq k}$ over $\mathcal{D}$.
\end{theorem}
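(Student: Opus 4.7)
The plan is to hide a large chunk of the ground set from the sampler by choosing $\mathcal{D}$ so that some elements of $L$ are never sampled, then plant a single high-value node inside that hidden chunk and make its identity adversarial. Concretely, I would partition $L = L_1 \sqcup L_2$ with $|L_1| = k-1$ and $|L_2| = |L| - k + 1$, and let $\mathcal{D}$ be the point-mass distribution on $S_0 = L_1$. This satisfies Assumption~\ref{assump:dist}.1 since $|S_0| = k-1 \leq k$, and it satisfies Assumption~\ref{assump:dist}.3 trivially: for a deterministic distribution the random variables $X_u$ are all constants, so both sides of the negative-correlation inequality equal the same $\{0,1\}$-valued product. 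However $p_u = 0$ for every $u \in L_2$, which violates Assumption~\ref{assump:dist}.2 since no constant $c$ makes $0 \geq 1/|L|^c$.

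Next, for each $v \in L_2$ I would construct a coverage instance $G_v = (L, R, E_v)$ with a common right side $R$ of size $m$: in $G_v$ the node $v$ has $N_{G_v}(v) = R$, while every other node in $L$ has no neighbors. Then $f_{G_v}(\mathrm{OPT}) = m$ for every $v$. Crucially, since $L_1 \subseteq L \setminus \{v\}$ for every $v \in L_2$, the sample content $(L_1, N_{G_v}(L_1)) = (L_1, \emptyset)$ is identical across the whole family, so the $t$ i.i.d.\ samples produce the exact same transcript no matter which $G_v$ is the true instance.

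With the data fixed, I would close the argument by a standard Yao-style averaging. Let $B$ be any (possibly randomized) OPSS algorithm and let $T$, with $|T| \leq k$, be its output; since the input to $B$ is the same across the family, the distribution of $T$ is independent of $v$. Drawing $v$ uniformly from $L_2$ gives
\[ \E_{B,v}[f_{G_v}(T)] = \Pr_{B,v}[v \in T] \cdot m \leq \frac{k}{|L_2|} \cdot m, \]
so there exists a fixed $v^* \in L_2$ with $\E_B[f_{G_{v^*}}(T)] / f_{G_{v^*}}(\mathrm{OPT}) \leq k/|L_2|$. Taking $|L|$ large with, e.g., $k$ fixed (or more generally $k = o(|L|)$) makes this ratio $o(1)$, so no constant $\alpha$ can be achieved.

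There is essentially no technical obstacle; the only minor points worth checking are (i) that the deterministic distribution genuinely satisfies Definition~\ref{Definition: negative correlation}, which is immediate from the factorization of indicators, and (ii) that the probability-over-samples statement in the OPSS definition is trivially met because the sample sequence is deterministic. If one prefers Assumption~\ref{assump:dist}.2 to be violated by a strictly positive but super-polynomially small probability instead of zero, one can equivalently mix $\mathcal{D}$ with a $1/2^{|L|}$-weight uniform distribution on $\mathcal{M}_{\le k}$; the argument goes through unchanged since, with high probability over polynomially many samples, no element of $L_2$ ever appears.
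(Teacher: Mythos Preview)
Your proposal is correct and follows essentially the same approach as the paper: hide the single high-value node in a portion of $L$ that $\mathcal{D}$ never touches, observe that all sample transcripts are identical across the family of instances, and finish with a Yao-style averaging argument. The only cosmetic difference is that the paper takes $\mathcal{D}$ to be the uniform distribution over size-$k$ subsets of one half of $L$ (and plants the good node in the other half), whereas you take $\mathcal{D}$ to be the point mass on a fixed $(k-1)$-set $L_1$ and plant the good node in $L_2$; both yield a ratio of order $k/|L|$ and both implicitly rely on $k = o(|L|)$ to conclude $o(1)$.
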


\begin{proof}
	The distribution $\mathcal{D}$ is constructed as follows.
	Let $L$ be partitioned into two disjoint subsets $L_1$ and $L_2$; each contains exactly $n / 2$ nodes.
	Let $\mathcal{D}$ be the uniform distribution over all subsets of $L_2$ with size exactly $k$.
	Clearly, this distribution satisfies Assumption \ref{assump:dist}.1 and \ref{assump:dist}.3, but not Assumption \ref{assump:dist}.2.
	
	Let $|L| = n$, $|R| = m$ and $L_1 = \{u_1, u_2, \cdots, u_{n/2}\}$.
	We first construct a class of graphs $G_1, \cdots, G_{n/2}$ where $G_i = (L, R, E_i)$.
	For $G_i$, $N_{G_i}(u_i) = R$ and for $u \neq u_i$, $N_{G_i}(u) = \emptyset$.
	Clearly, the optimal solution of any graph covers $m$ nodes.
	For any $G_i$ and sample $S \sim \mathcal{D}$, $N_{G_i}(S_i) = \emptyset$.
	Thus those graphs cannot be distinguished from the samples.
	
	We prove the desired ratio by a probabilistic argument.
	Let $G$ be a graph drawn uniformly at random from $G_1, \cdots, G_{n/2}$.
	Let $B$ be any (randomized) algorithm and $T$ be the solution it returns.
	Suppose that the solution $T$ of $B$ is fixed.
	Since $|T| \leq k$, $\Pr_G[N_G(T) = R \mid \mbox{the solution $T$ of } B \mbox{ is fixed}] \leq k / (n/2) = o(1)$.
	Since $B$ cannot distinguish those graphs from the samples, the solution $T$ of $B$ is independent of the random graph $G$.
	As a result, $\Pr_{B, G}[N_G(T) = R] = o(1)$ and $\E_{B, G}[f_G(T)] = o(1) \cdot m$, which implies there must be a $G$ from $G_1, \cdots, G_p$ such that $\E_{B}[f_G(T)] = o(1) \cdot m$.
	Thus $\E_{B}[f_G(T)] / f_G(OPT) = o(1)$.
\end{proof}
	
}

%
%
%

\end{document}